\definecolor{my-orange}{HTML}{E37238}
\definecolor{my-green}{HTML}{96BF0D}
\definecolor{my-dark}{HTML}{464646}
\definecolor{my-light}{HTML}{757575}
\newcommand{\thickhline}{%
    \noalign {\ifnum 0=`}\fi \hrule height 1.2pt
    \futurelet \reserved@a \@xhline
}
\newcolumntype{"}{@{\hskip\tabcolsep\vrule width 1.2pt\hskip\tabcolsep}}
\DeclareMathOperator{\Expec}{\mathbb{E}}
\DeclareMathOperator*{\logsumexp}{logsumexp}
\DeclareMathOperator*{\argmax}{arg\,max}
\newtheorem{proposition}{Proposition}
\newtheorem{lemma}{Lemma}
\newtheorem{assumption}{Assumption}
\newcommand{\R}{\mathbb{R}}
\newcommand{\N}{\mathbb{N}}
\newcommand{\E}{\mathcal{E}}
\newcommand{\INN}{g_\theta}
\newcommand{\bbeta}{\gamma}
\newcommand{\Lx}{\mathcal{L}_X}
\newcommand{\Ly}{\mathcal{L}_Y}
\newcommand{\spacePreSub}{\vspace{0mm}}
\newcommand{\spacePostSub}{\vspace{0mm}}
\newenvironment{numberProposition}[1]
  {\innerNumberProp}
  {\endinnerNumberProp}
\begin{document}

\title{Training Normalizing Flows with the Information Bottleneck for Competitive Generative Classification}
\author{
Lynton Ardizzone*\\
\small{Visual Learning Lab, Heidelberg University}\\
\small{\tt lynton.ardizzone@iwr.uni-heidelberg.de}
\And
Radek Mackowiak* \\
\small{Bosch Center for Artificial Intelligence (CA)} \\
\small{\tt radek.mackowiak@gmail.com}
\And
Carsten Rother\\
\small{Visual Learning Lab, Heidelberg University}
\And
Ullrich K\"othe\\
\small{Visual Learning Lab, Heidelberg University}
}

\maketitle
\begin{abstract}
%
%
%
%
%
%
%
The Information Bottleneck (IB) objective uses information theory to formulate a task-performance versus robustness trade-off.
It has been successfully applied in the standard discriminative classification setting.
We pose the question whether the IB can also be used to train generative likelihood models such as normalizing flows.
Since normalizing flows use invertible network architectures (INNs), they are information-preserving by construction.
This seems contradictory to the idea of a bottleneck.
In this work, firstly, we develop the theory and methodology of IB-INNs, 
a class of conditional normalizing flows where INNs are trained using the IB objective:
Introducing a small amount of {\em controlled} information loss allows for an asymptotically exact formulation of the IB, 
while keeping the INN's generative capabilities intact.
Secondly, we investigate the properties of these models experimentally, 
specifically used as 
generative classifiers.
This model class offers advantages such as 
improved uncertainty quantification
and out-of-distribution detection, 
but traditional generative classifier solutions suffer considerably in classification accuracy.
We find the trade-off parameter in the IB controls a mix of generative capabilities and accuracy close to standard classifiers.
Empirically, our uncertainty estimates in this mixed regime compare favourably to conventional generative and discriminative classifiers.
Code: \href{https://github.com/VLL-HD/IB-INN}{\tt github.com/VLL-HD/IB-INN}
\end{abstract}

\addtocontents{toc}{\protect\setcounter{tocdepth}{-5}}
\renewcommand*{\theHsection}{paper.\the\value{section}}

\vspace{-2mm}
\section{Introduction}
\vspace{-1mm}
The Information Bottleneck (IB) objective \citep{tishby2000information} allows for an information-theoretic view of neural networks,
for the setting where we have some observed input variable $X$, and want to predict some $Y$ from it. 
For simplicity, we limit the discussion to the common case of discrete $Y$ (i.e.~class labels), but results readily generalize.
The IB postulates existence of a latent space $Z$, where all information flow between $X$ and $Y$ is channeled through (hence the method's name).
In order to optimize predictive performance, IB attempts to maximize the mutual information $I(Y,Z)$ between $Y$ and $Z$.
Simultaneously, it strives to minimize the mutual information $I(X,Z)$ between $X$ and $Z$, 
forcing the model to ignore irrelevant aspects of $X$ which do not contribute to classification performance and only increase the potential for overfitting. 
The objective can thus be expressed as
\begin{equation} \label{eq:IB-loss}
    \mathcal{L}_\mathrm{IB} = I(X, Z) - \beta\,I(Y, Z) \ .
\end{equation}
The trade-off parameter $\beta$ is crucial to balance the two aspects.
The IB was successfully applied in a variational form \citep{DBLP:conf/iclr/AlemiFD017, kolchinsky2017nonlinear} to train feed-forward classification models $p(Y|X)$ with higher robustness to overfitting and adversarial attacks than standard ones.

In this work, we consider the relationship between $X$ and $Y$ from the opposite perspective --
using the IB, we train an invertible neural network (INN) as a conditional generative likelihood model $p(X|Y)$,
i.e. as a specific type of conditional normalizing flow.
In this case, $X$ is the variable of which the likelihood is predicted, and $Y$ is the class condition.
It is a generative model because one can sample from the learned $p(X|Y)$ at test time to generate new examples from any class,
although we here focus on optimal likelihood estimation for existing inputs, not the generating aspect.

\begin{wrapfigure}{r}{0.45\linewidth}
    \parbox{\linewidth}{
        \vspace{-3mm}
        \centering
        \resizebox{\linewidth}{!}{\begin{tikzpicture}[
    data domain/.style={fill=black!20, draw=black, rounded corners=3pt, minimum height=3.5em, minimum width=1em, align=center, font=\normalsize},
    fat arrow/.style={double arrow, fill=my-dark, draw=black, align=center, text depth=-0.5pt, text width=3em, text=black!5},
    unidirectional fat arrow/.style={single arrow, fill=my-dark, draw=black, align=center, text depth=-0.5pt, text width=4em, text=black!5},
    loss/.style={black!75, midway, align=center, font=\footnotesize\sffamily\bfseries}]

    \node [data domain, fill=black!5] (x) {$X$};
    \node [below=6pt of x.south, yshift=5pt] (inp) {\tiny{INPUT}};

    \node [fat arrow, right=2pt of x.east] (INN) {\textsf{\scriptsize{\textbf{INN}}}};

    \node [data domain, right=2pt of INN.east, fill=ProcessBlue!30] (z) {$Z$};
    \node [below=6pt of z.south, yshift=5pt] (gmm) {\tiny{GMM}};
    
    \node [unidirectional fat arrow, right=4pt of z.east] (inference) {\textsf{\scriptsize{\textbf{Bayes Rule}}}};
    
    \node [data domain, right=2pt of inference.east, fill=black!5] (y) {$Y$};
    \node [below=6pt of y.south, yshift=5pt] (output) {\tiny{CLASS}};

\end{tikzpicture}}
        \captionof{figure}{The Information Bottleneck Invertible Neural Network (IB-INN) as a generative classifier.}
        \label{fig:INN-basic}
        \vspace{-1mm}
    }
    
    \parbox{\linewidth}{
        \centering
        \resizebox{\linewidth}{!}{\begin{tikzpicture}[
    every node/.style={inner sep=0pt, outer sep=0pt, anchor=center, align=center, font=\sffamily, scale=0.68, text=black!75}
    ]
    
    \clip (-1.5, 1.01) rectangle (2.0, -1.0);

    \foreach \i in {0.1, 0.2, ..., 1} {
        \fill[opacity={0.5*\i}, my-orange] (-0.5, 0.3) circle ({1-\i});
    }
    \node [text=my-orange!60!black] at (-0.5, 0.3) {$\mu_1$};

    \foreach \i in {0.1, 0.2, ..., 1} {
        \fill[opacity={0.5*\i}, my-green] (0.5, -0.3) circle ({1-\i});
    }
    \node [text=my-green!60!black] at (0.5, -0.3) {$\mu_2$};

    \draw [densely dotted, black, opacity=0.4] (-0.9, -1.5) -- (0.9, 1.5);

    \node [label={below:\tiny uncertain class}] at (0.05, 0) {$\boldsymbol\times$};

    \node [label={below:\tiny confident class 1}] at (-0.9, 0.27)  {$\boldsymbol\times$};

    \node [label={below:\tiny confident class 2\\[-5pt]\tiny but out-of-distribution}] at (1.25, 0.76) {$\boldsymbol\times$};

\end{tikzpicture}}
        \captionof{figure}{Illustration of the latent output space of a generative classifier.
        The two class likelihoods for $Y=\{1,2\}$ are parameterized by their means $\mu_{\{1,2\}}$ in $Z$.
        The dotted line represents the decision boundary. 
        A confident, an uncertain, and an out-of-distribution sample are illustrated.\vspace{-0.3cm}}
        \label{fig:illust_generative_classifier}
        \vspace{-2mm}
     }
\end{wrapfigure}

We find that the IB, when applied to such a likelihood model $p(X|Y)$,
has special implications for the use as a so-called generative classifier (GC).
GCs stand in contrast to standard \emph{discriminative} classifers (DCs), which directly predict the class probabilities $p(Y|X)$.
For a GC, the posterior class probabilities are indirectly inferred at test time by Bayes' rule, cf.~Fig.~\ref{fig:INN-basic}: 
$p(Y|X)=p(X|Y)p(Y) / \mathbb{E}_{p(Y)}\left[p(X|Y)\right]$.
Because DCs optimize prediction performance directly, they achieve better results in this respect.
However, their models for $p(Y|X)$ tend to be most accurate near decision boundaries (where it matters), but deteriorate away from them (where deviations incur no noticeable loss).
Consequently, they are poorly calibrated \citep{guo2017calibration} and out-of-distribution data can not be easily recognized at test time \citep{ovadia2019can}.
In contrast, GCs model full likelihoods $p(X|Y)$ and thus implicitly full posteriors $p(Y|X)$, which leads to the opposite behavior -- better predictive uncertainty at the price of reduced accuracy.
Fig.~\ref{fig:illust_generative_classifier} illustrates the decision process in latent space $Z$.

In the past, deep learning models trained in a purely generative way, particularly flow-based models trained with maximum likelihood, achieved highly unsatisfactory accuracy,
so that some recent work has called into question the overall effectiveness of GCs \citep{fetaya2019conditional, nalisnick2018deep}.
In-depth studies of idealized settings \citep{Lasserre2007GenerativeOD, DBLP:books/lib/Bishop07} revealed the existence of a trade-off, 
controlling the balance between discriminative and generative performance.
In this work, we find that the IB can represent this trade-off, when applied to generative likelihood models.

To summarize our contributions, we combine two concepts -- the Information Bottleneck (IB) objective and Invertible Neural Networks (INNs).
Firstly, we derive an asymptotically exact formulation of the IB for this setting, resulting in our IB-INN model, a special type of conditional normalizing flow.
Secondly, we show that this model is especially suitable for the use as a GC: 
the trade-off parameter $\beta$ in the IB-INN's loss smoothly interpolates between 
the advantages of GCs (accurate posterior calibration and outlier detection),
and those of DCs (superior task performance).
Empirically, at the right setting for $\beta$, 
our model only suffers a minor degradation in classification accuracy compared to DCs
while exhibiting more accurate uncertainty quantification than pure DCs or GCs.


\vspace{-1mm}
\section{Related Work}
\vspace{-1mm}
\textbf{Information Bottleneck:}
The IB was introduced by \citet{tishby2000information} as a tool for information-theoretic optimization of compression methods.
This idea was expanded on by \citet{chechik2005information, gilad2003information, shamir2010learning} and \citet{friedman2001multivariate}. 
A relationship between IB and deep learning was first proposed by \citet{tishby2015deep}, and later experimentally examined by \citet{shwartz2017opening}, 
who use IB for the understanding of neural network behavior and training dynamics.
A close relation of IB to dropout, disentanglement, and variational autoencoding was discovered by \citet{achille2018information}, 
which led them to introduce Information Dropout as a way to take advantage of IB in discriminative models.
The approximation of IB in a variational setting was proposed independently by \citet{kolchinsky2017nonlinear} and \citet{DBLP:conf/iclr/AlemiFD017}, who especially demonstrate improved robustness against overfitting and adversarial attacks.  \\
\textbf{Generative Classification:} 
An in-depth analysis of the trade-offs between discriminative and generative models was first performed by \citet{ng2002discriminative} and was later extended by \citet{bouchard2004tradeoff,Lasserre2007GenerativeOD,xue2010generative}, who investigated the possibility of balancing the strengths of both methods via a hyperparameter, albeit for very simple models. 
GCs have been used more rarely in the deep learning era,
some exceptions being application to natural language processing \citep{yogatama2017generative},
and adversarial attack robustness \citep{li2018generative, DBLP:conf/iclr/SchottRBB19}.
However, \citet{fetaya2019conditional} found that conditional normalizing flows have poor discriminative performance, making them unsuitable as GCs.
GCs should be clearly distinguished from so-called hybrid models \citep{raina2004classification}:
these commonly only model the marginal $p(X)$ and jointly perform discriminate classification using shared features,
with their main application being semi-supervised learning.
Notable examples are \cite{kingma2014semi, chongxuan2017triple, nalisnick2019hybrid, grathwohl2019your}.

\vspace{-1mm}
\section{Method}
\vspace{-1mm}
Below, upper case letters denote random variables (RVs) (e.g.~$X$) and lower case letters their instances (e.g.~$x$). 
The probability density function of a RV is written as $p(X)$, the evaluated density as $p(x)$ or $p(X\!\!=\!\!x)$, and all RVs are vector quantities.
We distinguish true distributions from modeled ones by the letters $p$ and $q$, respectively.
The distributions $q$ always depend on model parameters, but we do not make this explicit to avoid notation clutter.
Assumption 1 in the appendix provides some weak assumptions about the domains of the RVs and their distributions.
Full proofs for all results are also provided in the appendix.

Our models have two kinds of learnable parameters.
Firstly, an invertible neural network (INN) with parameters $\theta$ maps inputs $X$ to latent variables $Z$ bijectively:
       $ Z=g_\theta(X) \Leftrightarrow X=g^{-1}_\theta(Z)$.
Assumption 2 in the Appendix provides some explicit assumptions about the network, its gradients, and the parameter space, 
which are largely fulfilled by standard invertible network architectures,  including the affine coupling architecture we use in the experiments.
Secondly, a Gaussian mixture model with class-dependent means $\mu_y$, where $y$ are the class labels, and unit covariance matrices is used as a reference distribution for the latent variables $Z$:
    \begin{align}
        q(Z\,|\,Y) = \mathcal{N}(\mu_y, \mathbb{I}) \quad\text{and}\quad
        q(Z) = \sum\nolimits_y p(y)\, \mathcal{N}(\mu_y, \mathbb{I}).
    \end{align}
For simplicity, we assume that the label distribution is known, i.e. $q(Y) = p(Y)$.
Our derivation rests on a quantity we call {\em mutual cross-information} $CI$ (in analogy to the well-known cross-entropy):
\begin{align}
    CI(U, V) = \Expec_{u,v\sim p(U,V)}\left[\log \frac{q(u, v)}{q(u)q(v)}\right].
\end{align}
Note that the expectation is taken over the true distribution $p$, whereas the logarithm involves model distributions $q$.
In contrast, plain mutual information uses the same distribution in both places.
Our definition is equivalent to the recently proposed predictive $\mathcal{V}$-information \citep{xu2020theory},
whose authors provide additional intuition and guarantees.
The following proposition (proof in Appendix) clarifies the relationship between mutual information $I$ and $CI$: 
\begin{proposition}
\label{th:mutual-cross-information}
\vspace{-1mm}
Assume that $q(.)$ can be chosen from a sufficiently rich model family (e.g. a universal density estimator, see Assumption 2).
Then for every $\eta>0$ there is a model such that
    $\big| I(U,V) - CI(U,V) \big| < \eta$
and $I(U,V) = CI(U,V)$ if $p(u,v) = q(u,v)$.
\vspace{-2mm}
\end{proposition}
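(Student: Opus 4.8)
The plan is to rewrite the difference $I(U,V)-CI(U,V)$ as a signed combination of Kullback--Leibler divergences between true and modeled distributions, and then drive each term to zero using the richness of the model family. First I would subtract the two definitions and merge the logarithms into a single ratio,
\begin{equation*}
I(U,V)-CI(U,V)=\Expec_{u,v\sim p(U,V)}\!\left[\log\frac{p(u,v)\,q(u)\,q(v)}{p(u)\,p(v)\,q(u,v)}\right].
\end{equation*}
Splitting this logarithm into three pieces and integrating out the variable on which each marginal factor does not depend (so that, e.g., $\Expec_{p(U,V)}[\log(p(u)/q(u))]$ collapses to the $U$-marginal expectation), the difference decomposes as
\begin{equation*}
I(U,V)-CI(U,V)=D_{\mathrm{KL}}\!\big(p(U,V)\,\|\,q(U,V)\big)-D_{\mathrm{KL}}\!\big(p(U)\,\|\,q(U)\big)-D_{\mathrm{KL}}\!\big(p(V)\,\|\,q(V)\big).
\end{equation*}

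Next I would invoke the modeling assumption: since $q$ ranges over a universal density estimator (Assumption 2), for any $\epsilon>0$ one can choose $q$ with $D_{\mathrm{KL}}(p(U,V)\,\|\,q(U,V))<\epsilon$. The two marginal terms then come along for free. Marginalization is a deterministic coarse-graining, so by the monotonicity (data-processing) property of the KL divergence, $D_{\mathrm{KL}}(p(U)\,\|\,q(U))\le D_{\mathrm{KL}}(p(U,V)\,\|\,q(U,V))$ and likewise for $V$, where the model marginals are $q(u)=\int q(u,v)\,dv$ and $q(v)=\int q(u,v)\,du$. As all three divergences are non-negative, the triangle inequality gives
\begin{equation*}
\big|I(U,V)-CI(U,V)\big|\le 3\,D_{\mathrm{KL}}\!\big(p(U,V)\,\|\,q(U,V)\big)<3\epsilon,
\end{equation*}
and taking $\epsilon=\eta/3$ yields the stated bound.

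The equality case is then immediate: if $p(u,v)=q(u,v)$ everywhere, the marginals coincide as well, $p(u)=q(u)$ and $p(v)=q(v)$, so the integrand defining $CI$ is identical to that defining $I$ and hence $I(U,V)=CI(U,V)$; equivalently, all three divergences above vanish.

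I expect the only delicate points to be technical rather than conceptual. One must ensure that all three expectations are finite so that the logarithm may be split and the triangle inequality applied, and that the universal-approximation step produces small KL in the \emph{joint} rather than merely in some weaker metric; this is exactly where the integrability and density regularity conditions of Assumption 1, together with the universality in Assumption 2, do the real work. The monotonicity of KL under marginalization is standard, but it is the linchpin that lets control of a single joint divergence discharge all three terms at once.
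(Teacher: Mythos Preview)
Your proof is correct and follows the same core strategy as the paper: decompose $I-CI$ into a signed sum of three KL divergences, bound the absolute value by the sum of the three, and invoke universality to drive them to zero. The one place you diverge is in how you control the marginal divergences. The paper treats $q(U)$, $q(V)$, and $q(U,V)$ as potentially independent models (three separate flow networks in the continuous case, or $q(U\mid V)$ plus a known or directly parametrized $q(V)$ in the discrete case) and bounds $|D_1-D_2-D_3|\le 3\max(D_1,D_2,D_3)$, then appeals to universality to make each $D_i$ small separately. You instead take $q(U)$ and $q(V)$ to be the honest marginals of $q(U,V)$ and use the data-processing inequality for KL under marginalization to get $D_2,D_3\le D_1$ for free, so a single joint approximation suffices. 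Your route is a bit cleaner and requires controlling only one model; the paper's route is slightly more general in that it does not require the three $q$'s to be mutually consistent under marginalization. Both arrive at the same $3\epsilon$-type bound and handle the equality case identically.
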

We replace both mutual information terms $I(X,Z)$ and $I(Y,Z)$ in Eq.~\ref{eq:IB-loss} with the mutual cross-information $CI$,
and derive optimization procedures for each term in the following subsections.

\spacePreSub
\subsection{INN-Based Formulation of the I(X,Z)-Term in the IB Objective}
\spacePostSub

Estimation of the mutual cross-information $CI(X,Z)$ between inputs and latents is problematic for deterministic mappings from $X$ to $Z$ \citep{amjad2018not},
and specifically for INNs, which are bijective by construction.
In this case, the joint distributions $q(X,Z)$ and $p(X,Z)$ are not valid Radon-Nikodym densities and both $CI$ and $I$ are undefined.
Intuitively, $I$ and $CI$ become infinite, because $p$ and $q$ have an infinitely high delta-peak at $Z=g_\theta(X)$, and are otherwise $0$.
For the IB to be applicable, some information has to be discarded in the mapping to $Z$, 
making $p$ and $q$ valid Radon-Nikodym densities.
In contrast, normalizing flows rely on all information to be retained for optimal generative capabilities and density estimation.

Our solution to this seeming contradiction comes from the practical use of normalizing flows.
Here, a small amount of noise is commonly added to dequantize $X$ (i.e. to turn discrete pixel values into real numbers), 
to avoid numerical issues during training.
We adopt this approach to artificially introduce a minimal amount of information loss:
Instead of feeding $X$ to the network, we input a noisy version $X'=X+\E$, where $\E\sim \mathcal{N}(0,\sigma^2\mathbb{I})=p(\E)$ is Gaussian with mean zero and covariance $\sigma^2\mathbb{I}$.
For a quantization step size $\Delta X$, 
the additional error on the estimated densities caused by the augmentation has a known bound decaying with $\exp(- \Delta X^2 / 2 \sigma^2)$ (see Appendix).
We are interested in the limit $\sigma \to 0$, so in practice, 
we choose a very small fixed $\sigma$, that is smaller than $\Delta X$.
This makes the error practically indistinguishable from zero.
The INN then learns the bijective mapping $Z_\E=g_\theta(X+\E)$, which guarantees $CI(X,Z_\E)$ to be well defined.
Minimizing this $CI$ according to the IB principle means that $g_\theta(X+\E)$ is encouraged to amplify the noise $\E$, 
so that $X$ can be recovered less accurately, see Fig. \ref{fig:noise_amplification} for illustration.
If the global minimum of the loss is achieved w.r.t.~$\theta$, $I$ and $CI$ coincide, as $CI(X, Z_\E)$ is an upper bound (also cf. Prop.~\ref{th:mutual-cross-information}):
\begin{proposition}
For the specific case that $Z_\E = \INN(X + \E)$, it holds that $I(X,Z_\E) \leq CI(X, Z_\E)$.
\label{th:ci_bound}
\vspace{-2mm}
\end{proposition}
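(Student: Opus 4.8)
The plan is to show that the gap $CI(X,Z_\E)-I(X,Z_\E)$ equals a Kullback--Leibler divergence and is therefore non-negative. Both quantities are expectations over the \emph{same} true joint $p(X,Z_\E)$; only the densities inside the logarithm differ ($q$ for $CI$, $p$ for $I$). So I would first subtract them under a single expectation and factor each joint into a marginal and a conditional, $p(x,z)=p(x)\,p(z\mid x)$ and $q(x,z)=q(x)\,q(z\mid x)$, which after cancelling the $x$-marginals gives
\begin{equation}
CI(X,Z_\E)-I(X,Z_\E)=\Expec_{(x,z)\sim p}\!\left[\log\frac{q(z\mid x)\,p(z)}{q(z)\,p(z\mid x)}\right].
\end{equation}

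The crucial step is the observation that the two conditionals coincide, $q(z\mid x)=p(z\mid x)$. This is special to the construction $Z_\E=\INN(X+\E)$: given $x$, the law of $Z_\E$ is the pushforward of the fixed noise $\E\sim\mathcal N(0,\sigma^2\mathbb I)$ through the deterministic map $\INN$, so by change of variables it is completely determined by $\sigma$ and $\theta$. The model $q$ has no additional freedom in the conditional --- its only learnable ingredients are the shared network $\INN$ and the latent marginal $q(Z)$ (the GMM). Substituting $q(z\mid x)=p(z\mid x)$ collapses the integrand to $\log\big(p(z)/q(z)\big)$, which depends on $z$ alone, so marginalising $x$ out leaves
\begin{equation}
CI(X,Z_\E)-I(X,Z_\E)=\Expec_{z\sim p(Z)}\!\left[\log\frac{p(z)}{q(z)}\right]=D_{\mathrm{KL}}\big(p(Z)\,\big\|\,q(Z)\big)\ \ge\ 0,
\end{equation}
by Gibbs' inequality. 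This also pins down equality: $I=CI$ exactly when $p(Z)=q(Z)$ (almost everywhere), consistent with the optimum described after Prop.~\ref{th:mutual-cross-information}.

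I expect the main obstacle to be rigorously justifying $q(z\mid x)=p(z\mid x)$ rather than the algebra. One must argue carefully that the generative model does not posit an independent conditional model for $Z$ given $X$, but inherits it from $\INN$ and $p(\E)$, so that the only discrepancy between $p$ and $q$ lives in the latent marginal. A secondary point to check is well-definedness: the noise augmentation is exactly what turns $p(x,z)$ and $q(x,z)$ into proper Radon--Nikodym densities (the motivation given above the statement), and one should confirm that all densities, the Jacobian factor $|\det J_{\INN}|$, and the resulting integrals exist under Assumptions~1--2, so that every term above is finite and the manipulations are valid.
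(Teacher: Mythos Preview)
Your proof is correct and follows essentially the same line as the paper's. The paper first invokes invariance of (cross\nobreakdash-)information under the homeomorphism $\INN$ to rewrite the gap as $CI(X,X{+}\E)-I(X,X{+}\E)$ and then uses $q(X{+}\E\mid X)=p(X{+}\E\mid X)$ to reduce it to a cross-entropy minus entropy term; you carry out the equivalent cancellation directly in latent space via $q(Z_\E\mid X)=p(Z_\E\mid X)$, arriving at the same conclusion that the gap equals $D_{\mathrm{KL}}\big(p(Z_\E)\,\big\|\,q(Z_\E)\big)\ge 0$.
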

Our approach should be clearly distinguished from applications of the IB to DCs, such as \cite{DBLP:conf/iclr/AlemiFD017}, which pursue a different goal.
There, the model learns to ignore the vast majority of input information and keeps only enough to predict the class posterior $p(Y\,|\,X)$.
In contrast, we induce only a small, explicitly adjustable loss of information to make the IB well-defined.
As a result, the amount of retained information in our generative IB-INNs is orders of magnitude larger than in DC approaches,
which is necessary to represent accurate class-conditional likelihoods $p(X\,|\,Y)$.

\begin{figure}[t]
\centering
    \parbox{0.45\linewidth}{
        \centering
        \includegraphics[width=\linewidth]{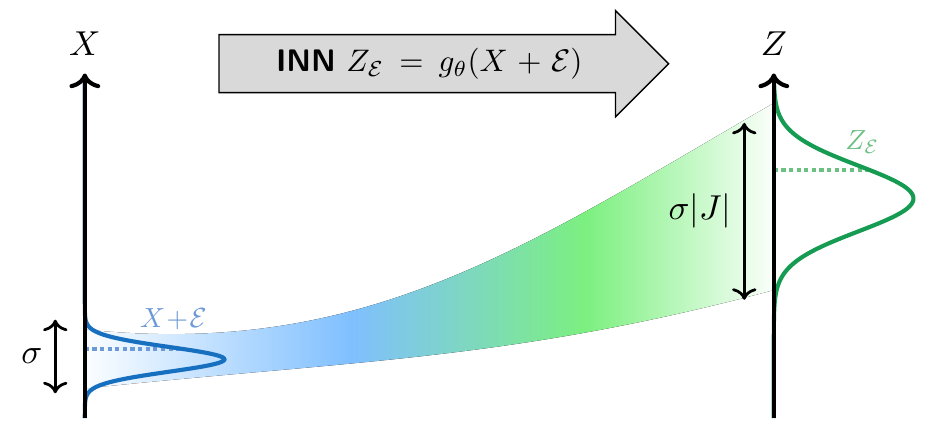}
    } \hspace{1mm}
    \parbox{0.43\linewidth}{
        \captionof{figure}{The more the noise is amplified in relation to the noise-free input, the lower the mutual cross-information
        between noisy latent vector $Z_\E$ and noise-free input $X$.}
        \label{fig:noise_amplification}
    }
    \vspace{-6mm}
\end{figure}

We now derive the loss function that allows optimizing $\theta$ and $\mu_y$ to minimize the noise-augmented $CI(X,Z_\E)$
in the limit of small noise $\sigma\rightarrow 0$.
Full details are found in appendix.
We decompose the mutual cross-information into two terms
\begin{align*}
    CI(X,Z_\E)=\Expec_{p(X),p(\E)}\big[\!-\!\log q\big(Z_\E\!=\!g_\theta(x\!+\!\varepsilon)\big)\,\big] 
    +\underbrace{\Expec_{p(X),p(\E)}\big[\log q\big(Z_\E\!=\!g_\theta(x+\varepsilon)\,\big|\,x\big)\,\big]}_{:=A}.
\end{align*}
The first expectation can be approximated by the empirical mean over a finite dataset, because the Gaussian mixture distribution $q(Z_\E)$ is known analytically.
To approximate the second term, we first note that the condition $X=x$ can be replaced with $Z=g_\theta(x)$, because $g_\theta$ is bijective and both conditions convey the same information
\begin{align*}
   A =
   \Expec_{p(X),p(\E)}\big[\log q\big(Z_\E=g_\theta(x+\varepsilon)\,\big|\,Z=g_\theta(x)\big)\,\big].
\end{align*}
We now linearize $g_\theta$ by its first order Taylor expansion,
    $g_\theta(x+\varepsilon) = g_\theta(x) + J_x\varepsilon + O(\varepsilon^2)$,
where $J_x\!=\!\frac{\partial g_\theta(X)}{\partial X}\big|_x$ denotes the Jacobian at $X\!=\!x$.
Going forward, we write $O(\sigma^2)$ instead of $O(\varepsilon^2)$ for clarity,
noting that both are equivalent because we can write $\varepsilon = \sigma n$ with $n\sim \mathcal{N}(0,\mathbb{I})$,
and $\| \varepsilon \| = \sigma \| n \|$.
Inserting the expansion into $A$, the $O(\sigma^2)$ can be moved outside of the expression:
It can be moved outside the $\log$, because that has a Lipschitz constant of $1/ \inf q(g_\theta(X + \E))$, which we show is uniformly bounded in the full proof.
The $O(\sigma^2)$ can then be exchanged with the expectation because the expectation's argument is also uniformly bounded, finally leading to
\begin{align*}
   A &=
   \Expec_{p(X),p(\E)}\big[\log q\big(g_\theta(x)+J_x\varepsilon\,\big|\,g_\theta(x)\big)\,\big] + O(\sigma^2).
\end{align*}
Since $\varepsilon$ is Gaussian with mean zero and covariance $\sigma^2\mathbb{I}$, the conditional distribution is Gaussian with mean $g_\theta(x)$ and covariance $\sigma^2 J_x J_x^T$.
The expectation with respect to $p(\E)$ is thus the negative entropy of a multivariate Gaussian and can be computed analytically as well
\begin{align*}
   A &=\Expec_{p(X)}\left[-\frac{1}{2}\log\big( \det(2\pi e \sigma^2 J_x J_x^T)\big) \right] + O(\sigma^2) \\
   &= \Expec_{p(X)}\big[-\log|\det(J_x)| \big] - d \log(\sigma) -\frac{d}{2}\log(2\pi e) + O(\sigma^2)
\end{align*}
with $d$ the dimension of $X$. 
To avoid running the model twice (for $x$ and $x+\varepsilon$), we approximate the expectation of the Jacobian determinant by $0^\text{th}$-order Taylor expansion as
\begin{align*}
    \Expec_{p(X)}\!\!\big[\log|\det(J_x)| \big]= \Expec_{p(X),p(\E)}\!\!\big[\log|\det(J_\varepsilon)| \big] + O(\sigma),
\end{align*}
where $J_\varepsilon$ is the Jacobian evaluated at $x+\varepsilon$ instead of $x$.
The residual can be moved outside of the $\log$ and the expectation because $J_\varepsilon$ is uniformly bounded in our networks.

Putting everything together, we drop terms from $CI(X,Z_\E)$ that are independent of the model or vanish with rate at least $O(\sigma)$ as $\sigma\rightarrow 0$.
The resulting loss $\mathcal{L}_X$ becomes
\begin{align}
    \mathcal{L}_X &= \Expec_{p(X),\, p(\E)}\big[-\log q\big(g_\theta(x\!+\!\varepsilon)\big)
    -\log\big|\det(J_\varepsilon)\big|\,\big].
    \label{eq:lx_loss}
\end{align}
Since the change of variables formula defines the network's generative distribution as
    $q_X(x) = q\big(Z=g_\theta(x)\big)\,\big|\det(J_x)\big|$,
$\mathcal{L}_X$ is the negative log-likelihood of the perturbed data under $q_X$,
\vspace{-2mm}
\begin{align}
    \mathcal{L}_X &= \Expec_{p(X),p(\E)}\big[-\log q_X(x+\varepsilon)\big].
    \label{eq:nll_equivalent}
\end{align}
The crucial difference between $CI(X,Z_\E)$ and $\mathcal{L}_X$ is the elimination of the term $-d\log(\sigma)$.
It is huge for small $\sigma$ and would dominate the model-dependent terms, making minimization of $CI(X,Z_\E)$ very hard.
Intuitively, the fact that $CI(X,Z_\E)$ diverges for $\sigma \to 0$ highlights why $CI(X,Z)$ is undefined for bijectively related $X$ and $Z$.
In practice, we estimate $\mathcal{L}_X$ by its empirical mean on a training set $\{x_i,\varepsilon_i\}^N_{i=1}$ of size $N$, denoted as $\mathcal{L}^{(N)}_X$.

It remains to be shown that replacing $I(X,Z_\E)$ with $\mathcal{L}^{(N)}_X$ in the IB loss Eq.~\ref{eq:IB-loss} does not fundamentally change the solution of the learning problem in the limit of large $N$, small $\sigma$ and sufficient model power.
Sufficient model power here means that the family of generative distributions realizable by $g_\theta$ should be a universal density estimator (see Appendix, Assumption 2).
This is the case if $g_\theta$ can represent increasing triangular maps \citep{bogachev2005triangular}, 
which has been proven for certain network architectures explicitly \citep[e.g.][]{jaini2019sum, huang2018neural},
including the affine coupling networks we use for the experiments \citep{teshima2020coupling}.
Propositions~\ref{th:mutual-cross-information} \& \ref{th:ci_bound} then tell us that we may optimize $CI(X,Z_\E)$ as an estimator of $I(X,Z_\E)$.
The above derivation of the loss can be strengthened into
\begin{proposition}
Under Assumptions 1 and 2, for any $\epsilon,\eta > 0$ and $0< \delta <1$ there are $\sigma_0 >0$ and $N_0 \in \N$, 
such that $\forall N \geq N_0$ and $\forall 0 < \sigma < \sigma_0$,
the following holds uniformly for all model parameters $\theta$:
\vspace{-1mm}
\begin{align*}
    & \mathrm{Pr}\left(\left|\,
     CI(X,Z_\E)+d\log\sqrt{2\pi e \sigma^2}-\mathcal{L}_X^{(N)}
     \right| > \epsilon \right) < \delta \; \\
    \text{and} \quad &
    \mathrm{Pr}\left(\left\|\,
     \frac{\partial}{\partial \theta}  CI(X,Z_\E)
     - \frac{\partial}{\partial \theta}  \mathcal{L}_X^{(N)}
     \right\| > \eta \right) < \delta
\end{align*}
\vspace{-3mm}
\end{proposition}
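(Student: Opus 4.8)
The plan is to decompose the target difference into a \emph{deterministic} bias, controlled by sending $\sigma\to 0$, and a \emph{stochastic} sampling error, controlled by sending $N\to\infty$ through a law of large numbers that is uniform over the parameter space $\Theta$. The algebraic relation between $CI$ and $\Lx$ is already furnished by the derivation preceding the statement; what must be added is (i) uniformity of the Taylor remainders in $\theta$ and (ii) uniform concentration of the empirical loss and its gradient.

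First I would consolidate the preceding derivation into a single population-level identity. Tracking the remainders from the first-order expansion of $\INN$ and the zeroth-order expansion of $\log|\det J_\varepsilon|$, and using $d\log\sqrt{2\pi e\sigma^2}=\tfrac{d}{2}\log(2\pi e)+d\log\sigma$, one obtains
\begin{align*}
    CI(X,Z_\E) + d\log\sqrt{2\pi e\sigma^2} = \Lx + R_\sigma(\theta),
\end{align*}
where $\Lx=\Expec_{p(X),p(\E)}\big[-\log q(\INN(x+\varepsilon))-\log|\det(J_\varepsilon)|\big]$ is the population version of the loss and $R_\sigma(\theta)=O(\sigma)$. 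The crucial point, for which Assumption 2 is designed, is that this remainder is \emph{uniform in $\theta$}: the bounds justifying the extraction of each $O(\sigma)$ from inside the logarithm and the expectation — the uniform lower bound on $q(\INN(X+\E))$ that controls the Lipschitz constant of $\log$, and the uniform upper bound on $J_\varepsilon$ — do not depend on the particular network weights. Hence there is $\sigma_0>0$ with $\sup_\theta|R_\sigma(\theta)|<\epsilon/2$ for all $0<\sigma<\sigma_0$.

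Next I would bound the sampling error $\Lx-\Lx^{(N)}$ uniformly over $\Theta$. The integrand $f_\theta(x,\varepsilon)=-\log q(\INN(x+\varepsilon))-\log|\det(J_\varepsilon)|$ is, under Assumptions 1 and 2, uniformly bounded (or sub-exponential) on the support of $p(X)p(\E)$ and Lipschitz in $\theta$ with a constant independent of $(x,\varepsilon)$, while $\Theta$ is compact with finite covering number. A standard union-bound-over-a-net argument — a Hoeffding or Bernstein inequality at each point of a finite $\theta$-net, interpolated between net points via the Lipschitz property — then yields a uniform law of large numbers: there is $N_0$ such that $\mathrm{Pr}\big(\sup_\theta|\Lx-\Lx^{(N)}|>\epsilon/2\big)<\delta$ for all $N\geq N_0$. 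Combining with the deterministic bias by the triangle inequality gives the first claim, with the same $\sigma_0,N_0$ for every $\theta$.

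For the gradient statement I would argue identically, first noting that the correction $d\log\sqrt{2\pi e\sigma^2}$ is $\theta$-independent, so $\partial_\theta CI(X,Z_\E)=\partial_\theta\Lx+\partial_\theta R_\sigma(\theta)$. I expect the main obstacle to be upgrading the remainder control to the $C^1$ sense, namely showing $\partial_\theta R_\sigma(\theta)=O(\sigma)$ uniformly: this requires that the Taylor remainders and the bounds used to extract them survive differentiation in $\theta$, which is exactly what the assumptions on the network's gradients (uniform bounds on $J_\varepsilon$, $\partial_\theta J_\varepsilon$, and $\partial_\theta\INN$) are meant to guarantee. Granting this, a second uniform law of large numbers applied to the gradient integrand $\partial_\theta f_\theta$ — again bounded and Lipschitz in $\theta$ under Assumption 2 — gives $\mathrm{Pr}\big(\sup_\theta\|\partial_\theta\Lx-\partial_\theta\Lx^{(N)}\|>\eta/2\big)<\delta$ for large $N$, and the triangle inequality closes the argument. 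The genuinely delicate steps are therefore the two \emph{uniform} concentration bounds and the $C^1$-control of the remainders; the identity relating $CI$ and $\Lx$ is already in hand.
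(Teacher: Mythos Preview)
Your proposal is correct and follows essentially the same two-step decomposition as the paper: a deterministic bias $CI(X,Z_\E)+d\log\sqrt{2\pi e\sigma^2}-\Lx=R_\sigma(\theta)$ controlled uniformly in $\theta$ as $\sigma\to0$ (the paper's Lemma~2), plus a stochastic sampling error $\Lx-\Lx^{(N)}$ handled by a uniform law of large numbers over the compact parameter space (the paper's Lemma~1, which cites the ULLN of Newey--McFadden directly rather than building it from a net argument), combined by the triangle inequality with $\eta_i=\epsilon_i/2$. Your treatment of the gradient remainder is arguably more careful---you correctly identify that one needs $\partial_\theta R_\sigma=O(\sigma)$ uniformly (i.e.\ $C^1$-control), whereas the paper appeals somewhat loosely to Arzel\`a--Ascoli for this step.
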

The first statement proves consistence of $\mathcal{L}^{(N)}_X$, and the second justifies gradient-descent optimization on the basis of $\mathcal{L}^{(N)}_X$.
Proofs can be found in the appendix.

\spacePreSub
\subsection{GMM-Based Formulation of the I(Z,Y)-Term in the IB Objective}

\spacePostSub

Similarly to the first term in the IB-loss in Eq.~\ref{eq:IB-loss}, we also replace the mutual information $I(Y,Z)$ with $CI(Y,Z_\E)$.
Inserting the likelihood $q(z\,|\,y)=\mathcal{N}(z; \mu_y,\mathbb{I})$ of our latent Gaussian mixture model into the definition and recalling that $q(Y)=p(Y)$, this can be decomposed into
\begin{align}
    \label{eq:ci_y}
CI&(Y, Z_\E) = \Expec_{p(Y)}\big[-\log p(y)\big]
    + \Expec_{p(X,Y),p(\E)}\left[ \log 
    \frac{q\big(g_\theta(x\!+\!\varepsilon) \,|\, y\big)\,p(y)}
    {\sum_{y'} q\big(g_\theta(x\!+\!\varepsilon) \,|\, y'\big)\,p(y')}\right].
\end{align}
In this case, $CI(Y, Z_\E)$ is a lower bound on the true mutual information $I(Y,Z_\E)$, 
allowing for its maximization in our objective.
In fact, it corresponds to a bound originally proposed by \citet{barber2003algorithm} (see their Eq.~3): 
The first term is simply the entropy $h(Y)$, because $p(Y)$ is known.
The second term can be rewritten as the negative cross-entropy $-h_q(Y\mid Z_\E)$.
For $I(Y,Z_\E)$, we would have the negative entropy $-h(Y\mid Z_\E)$ in its place,
then Gibbs' inequality leads directly to $CI(Y,Z_\E) \leq I(Y,Z_\E)$.

The first expectation can be dropped during training, as it is model-independent.
Note how the the second term can also be written as the expectation of the GMM's log-posterior $\log q(y \,|\, z)$.
Since all mixture components have unit covariance, the elements of $Z$ are conditionally independent and the likelihood factorizes as $q(z \,|\, y) = \prod_j q(z_j\,|\,y)$. 
Thus, $q(y \,|\, z)$ can be interpreted as a naive Bayes classifier.
In contrast to naive Bayes classifiers in data space, which typically perform badly because raw features are not conditionally independent, our training enforces this property in latent space and ensures accurate classification.
Defining the loss {\small $\mathcal{L}^{(N)}_Y$} as the empirical mean of the log-posterior in a training set $\{x_i,y_i,\varepsilon_i\}_{i=1}^N$ of size N, we get 
\vspace{-1mm}
\begin{align}
\!\! \mathcal{L}^{(N)}_Y \!=\! 
\frac{1}{N}\sum_{i=1}^N \log \frac{\mathcal{N}\big(g_\theta(x_i+\varepsilon_i); \mu_{y_i},\mathbb{I} \big)\,p(y_i)}{\sum_{y'} \mathcal{N}\big(g_\theta(x_i+\varepsilon_i); \mu_{y'},\mathbb{I} \big)\,p(y')}.
\label{eq:ly_loss}
\end{align}
\vspace{-5mm}
\spacePreSub
\subsection{The IB-INN-Loss and its Advantages}
\spacePostSub
\vspace{-1mm}
Replacing the mutual information terms in Eq.~\ref{eq:IB-loss} with their empirical estimates $\mathcal{L}^{(N)}_X$ and $\mathcal{L}^{(N)}_Y$, our model parameters $\theta$ and $\{\mu_1,...,\mu_K\}$ are trained by gradient descent of the {\em IB-INN loss}
\begin{align}
    \mathcal{L}^{(N)}_\text{IB-INN} = \mathcal{L}^{(N)}_X - \beta\,\mathcal{L}^{(N)}_Y
\label{eq:ib_loss_function}
\end{align}
In the following, we will interpret and discuss the nature of the loss function in Eq.~\ref{eq:ib_loss_function} and
form an intuitive understanding of why it is more suitable than the class-conditional negative-log-likelihood (`class-NLL') 
traditionally used for normalizing-flow type generative classifiers:
$\mathcal{L}_{\text{class-NLL}} = - \Expec \log\big( q_\theta(x|y) \big)$.
The findings are represented graphically in Fig.~\ref{fig:losses}.\\
{\bf $\mathcal{L}_X$-term:}
As shown by Eq.~\ref{eq:nll_equivalent}, the term is the (unconditional) negative-log-likelihood loss used for normalizing flows,
with the difference that $q(Z)$ is a GMM rather than a unimodal Gaussian.
We conclude that this loss term encourages the INN to become an accurate likelihood model
under the marginalized latent distribution and to ignore any class information. \\
{\bf $\mathcal{L}_Y$-term:}
Examining Eq.~\ref{eq:ly_loss}, we see that for any pair $(g(x + \varepsilon), y)$, the cluster centers ($\mu_{Y\ne y}$) of the other classes are repulsed (by minimizing the denominator),
while $\INN(x + \varepsilon)$ and the correct cluster center $\mu_y$ are drawn together.
Note that the class-NLL loss only captures the second aspect and lacks repulsion, resulting in a much weaker training signal.
We can also view this in a different way:
by substituting $q(x|y) \, \big| \mathrm{det}( J_x ) \big|^{-1}$ for $q(z|y)$,
the second summand of Eq.~\ref{eq:ci_y} simplifies to $\log q(y|x)$,
since the Jacobian cancels out.
This means that our $\mathcal{L}_Y$ loss directly maximizes the correct class probability,
while ignoring the data likelihood.
Again, this improves the training signal: as \citet{fetaya2019conditional} showed,
the data likelihood will otherwise dominate the class-NLL loss, so that lack of classification accuracy is insufficiently penalized.\\
{\bf Classical class-NLL loss:}
The class-NLL loss or an approximation thereof is used to train standard GCs.
The IB-INN loss reduces to this case for $\beta=1$, because the first summand in $\mathcal{L}_X$ (cf.~Eq.~\ref{eq:lx_loss}) cancels with the denominator in Eq.~\ref{eq:ly_loss}.
Then, the INN no longer receives a penalty when latent mixture components overlap, and the GMM looses its class discriminatory power, as Fig.~\ref{fig:losses} illustrates:
Points are only drawn towards the correct class, but there is no loss component repulsing them from the incorrect classes.
As a result, all cluster centers tend to collapse together,
leading the INN to effectively just model the marginal data likelihood
\citep[as found by][]{fetaya2019conditional}.
Similarly, \citet{wu2019learnability} found that $\beta=1$ is the minimum possible value to perform classification
with discriminative IB methods.

\begin{figure*}
    \centering
    
    \parbox{0.8\linewidth}{
        \parbox{0.31\linewidth}{
            \centering
            {\tiny Our $\mathcal{L}_X$ loss:}\\
            \resizebox{\linewidth}{!}{\begin{tikzpicture}[
    every node/.style={inner sep=0pt, outer sep=0pt, anchor=center, align=center, font=\sffamily, scale=0.75, text=black!75}
    ]

    \clip (-1.5, 1.2) rectangle (2.0, -1.2);
    \fill [fill=my-orange, fill opacity=0.1] (-1.5, -1.5) rectangle (2.2, 1.5);

    \foreach \i in {0.1, 0.2, ..., 1.3} {
        \fill[opacity={0.25*\i}, my-green] (-0.5, 0.3) circle ({1.3-\i});
    }
    \node [text=black] at (-0.5, 0.3) {$\mu_1$};

    \foreach \i in {0.1, 0.2, ..., 1.3} {
        \fill[opacity={0.25*\i}, my-green] (0.5, -0.3) circle ({1.3-\i});
    }
    \node [text=black] at (0.5, -0.3) {$\mu_2$};

    \draw [densely dotted, black, opacity=0.4] (-0.9, -1.5) -- (0.9, 1.5);


    \node [label={below:\tiny Sample $\INN(x)$}] at (-0.8, -0.6) {$\boldsymbol\times$};


\end{tikzpicture}}%
            }
        \parbox{0.31\linewidth}{
            \centering
            {\tiny Our $\mathcal{L}_Y$ loss:}\\
            \resizebox{\linewidth}{!}{\begin{tikzpicture}[
    every node/.style={inner sep=0pt, outer sep=0pt, anchor=center, align=center, font=\sffamily, scale=0.75, text=black!75}
    ]

    \clip (-1.5, 1.2) rectangle (2.0, -1.2);
    \fill [fill=my-orange, fill opacity=0.1] (-1.5, -1.5) rectangle (2.2, 1.5);

    \foreach \i in {0, 0.03, ..., 1.3} {
        \clip (-1.5, -1.5) rectangle (2.2, 1.5);
        \fill [opacity={0.07}, my-green] (-1.5, 1.5) -- (-1.5,{-4*\i + 1.5}) -- ({2.4*\i - 1.5}, 1.5);
    }

    \draw[densely dotted, black, opacity=0.4] (-0.5, 0.3) circle (0.5);

    \draw[densely dotted, black, opacity=0.4] (0.5, -0.3) circle (0.5);
    
    \node [text=black] at (0.5, -0.3) {$\mu_2$};
    \node [text=black] at (-0.5, 0.3) {$\mu_1$};

    \node [text=black] at (0.5, -0.3) {$\mu_2$};

    \draw [densely dotted, black, opacity=0.4] (-0.9, -1.5) -- (0.9, 1.5);


    \node [label={below:\tiny Sample $\INN(x)$}] at (-0.8, -0.6) {$\boldsymbol\times$};
    \draw[arrows=>-<, thick, draw=my-green!50!black, shorten >=1.5mm, shorten <=1.5mm] (-0.8, -0.6) -- (-0.5, 0.3);
    \draw[arrows=<->, thick, draw=my-orange, shorten >=1.5mm, shorten <=1.5mm] (-0.8, -0.6) -- (0.5, -0.3);


\end{tikzpicture}}%
        }
        \parbox{0.31\linewidth}{
            \centering
            {\tiny Conventional class-NLL loss:}\\
            \resizebox{\linewidth}{!}{\begin{tikzpicture}[
    every node/.style={inner sep=0pt, outer sep=0pt, anchor=center, align=center, font=\sffamily, scale=0.75, text=black!75}
    ]

    \clip (-1.5, 1.2) rectangle (2.0, -1.2);
    \fill [fill=my-orange, fill opacity=0.1] (-1.5, -1.5) rectangle (2.2, 1.5);

    \foreach \i in {0.1, 0.2, ..., 1.3} {
        \fill[opacity={0.25*\i}, my-green] (-0.5, 0.3) circle ({1.3-\i});
    }
    \node [text=black] at (-0.5, 0.3) {$\mu_1$};
    
    \draw[densely dotted, black, opacity=0.4] (0.5, -0.3) circle (0.5);

    \node [text=black] at (0.5, -0.3) {$\mu_2$};

    \draw [densely dotted, black, opacity=0.4] (-0.9, -1.5) -- (0.9, 1.5);


    \node [label={below:\tiny Sample $\INN(x)$}] at (-0.8, -0.6) {$\boldsymbol\times$};
    \draw[arrows=>-<, thick, draw=my-green!50!black, shorten >=1.5mm, shorten <=1.5mm] (-0.8, -0.6) -- (-0.5, 0.3);


\end{tikzpicture}}%
        }
        \parbox{0.035\linewidth}{
            \resizebox{\linewidth}{!}{\begin{tikzpicture}[
    every node/.style={inner sep=0pt, outer sep=0pt, anchor=center, align=center, font=\sffamily, scale=0.75, text=black!75}
    ]

    \draw [draw=my-dark, line width=0.2pt, fill=my-orange, fill opacity=0.1]
        (0, -1.3) -- (-0.1, -1.2) -- (-0.1, 1.2) -- (0, 1.3) -- (0.1, 1.2) -- (0.1, -1.2) -- (0, -1.3);

    \begin{scope}
    \foreach \i in {0.1, 0.2, ..., 1} {
        \clip (0, -1.3) -- (-0.1, -1.2) -- (-0.1, 1.2) -- (0, 1.3) -- (0.1, 1.2) -- (0.1, -1.2) -- (0, -1.3);
        \fill [opacity=0.05, my-green] (-0.1, -1.3) rectangle (0.1, {1.3 - 3*\i});
    }
    \end{scope}

    \node [scale=0.7, rotate=90] at (0.19, 0.8) {higher loss};
    \node [scale=0.7, rotate=90] at (0.19, -0.8) {lower loss};

\end{tikzpicture}}%
        }
    }
    \vspace{-2mm}
    \caption{Illustration of the loss landscape for our IB formulation \textit{(left, middle)} and standard class-conditional negative-log-likelihood \textit{(right)}.
    The loss is shown for an input $x$ belonging to class $Y\!=\!1$, green areas correspond to low loss. 
    The orange arrows and black inverted arrows indicate repulsive and attractive interactions with the cluster centers.
    Crucially, standard NNL exerts no repulsive force. \vspace{-8mm}}
    \label{fig:losses}
\end{figure*}
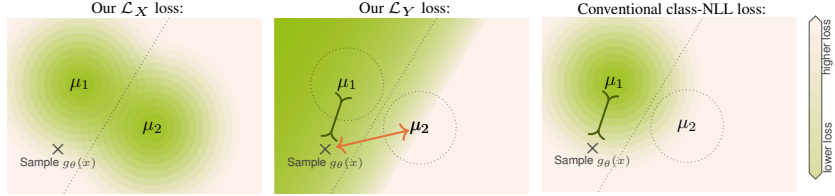



\vspace{-1mm}
\section{Experiments}
\vspace{-2mm}
\label{sec:04_experiments}
In the following, we examine the properties of the IB-INN used as a GC, 
especially the quality of uncertainty estimates and OoD detection.
We construct our IB-INN by combining the design efforts of various works on INNs and normalizing flows.
In brief, we use a Real-NVP architecture consisting of affine coupling blocks \citep{dinh2016density}, 
with added improvements from recent works \citep{kingma2018glow, jacobsen2018excessive, jacobsen2018irevnet, ardizzone2019guided}.
A detailed description of the architecture is given in the appendix.
We learn the set of means $\mu_Y$ as free parameters jointly with the remaining model parameters in an end-to-end fashion using the loss in Eq.~\ref{eq:ib_loss_function}.
The practical implementation of the loss is explained in the appendix.

We apply two additional techniques while learning the model, label smoothing and loss rebalancing:\\
{\bf Label smoothing}
Hard labels force the Gaussian mixture components to be maximally separated,
so they drift continually further apart during training, leading to instabilities.
Label smoothing \citep{szegedy2016rethinking} with smoothing factor $0.05$ prevents this, and we also apply it to all baseline models.\\
{\bf Loss rebalancing}
The following rebalancing scheme allows us to use the same hyperparameters when changing $\beta$ between 5 orders of magnitude.
Firstly, we divide the loss $\mathcal{L}_X$ by the number of dimensions of $X$,
which approximately matches its magnitude to the $\mathcal{L}_Y$ loss.
We define a corresponding $\bbeta \coloneqq \beta / \mathrm{dim}(X)$ to stay consistent with the IB definition.
Secondly, we scale the entire loss by a factor $2 / (1+\bbeta)$. 
This ensures that it keeps the same magnitude when changing $\bbeta$.
\begin{equation}
\mathcal{L}_{\mathrm{IB}}^{(N)} = \frac{2}{1 + \bbeta}  
\left( \frac{\mathcal{L}_X^{(N)}}{\mathrm{dim}(X)} - \bbeta \,\mathcal{L}_Y^{(N)} \right)
\end{equation}
Finally, the noise amplitude $\sigma$ should be chosen to satisfy two criteria:
it should be small enough so that the Taylor expansions in the loss for $\sigma \to 0$ are sufficiently accurate,
and it should also not hinder the model's performance.
Our ablation provided in the Appendix indicates that both criteria are satisfied when $\sigma \lessapprox 0.25 \Delta X$,
with the quantization step size $\Delta X$, so we fix $\sigma = 10^{-3}$ for the remaining experiments.

\vspace{-1mm}
\spacePreSub
\subsection{Comparison of Methods}
\spacePostSub
\vspace{-1mm}
In addition to the IB-INN, we train several alternative methods.
For each, we use exactly the same INN model, or an equivalent feed-forward ResNet model.
Every method has the exact same hyperparameters and training procedure,
the only difference being the loss function and invertibility. \\
{\bf Class-NLL:}
As a standard generative classifier,
we firstly train an INN with a GMM in latent space naively as a conditional generative model, 
using the class-conditional maximum likelihood loss.
Secondly, we also train a regularized version, to increase the classification accuracy.
The regularization consists of leaving the class centroids $\mu_Y$ fixed on a hyper-sphere,
forcing some degree of class-separation. \\
%
{\bf Feed-forward} As a DC baseline, we train a standard ResNet \citep{he2016deep} with softmax cross entropy loss.
We replace each affine coupling block by a ResNet block, leaving all other hyperparameters the same.\\
%
{\bf i-RevNet} \citep{jacobsen2018irevnet}:
To rule out any differences stemming from the constraint of invertibility, we additionally train the INN as a standard softmax classifier,
by projecting the outputs to class logits. 
While the architecture is invertible, it is not a generative model and trained just like a standard feed-forward classifier. \\
%
{\bf Variational Information Bottleneck (VIB):}
To examine which observed behaviours are due to the IB in general,
and what is specific to GCs, 
we also train the VIB \citep{DBLP:conf/iclr/AlemiFD017}, a feed-forward DC, using a ResNet.
We convert the authors definition of $\beta$ to our $\bbeta$ for consistency.
%


\vspace{-1mm}
\spacePreSub
\subsection{Quantitative measurements}
\spacePostSub
\vspace{-1mm}
\begin{wrapfigure}{r}{0.4\textwidth}
\vspace{-10mm}
    \centering
    \parbox{0.49\linewidth}{ \centering {\tiny RGB rotation (CIFAR10)}
        \includegraphics[width=\linewidth]{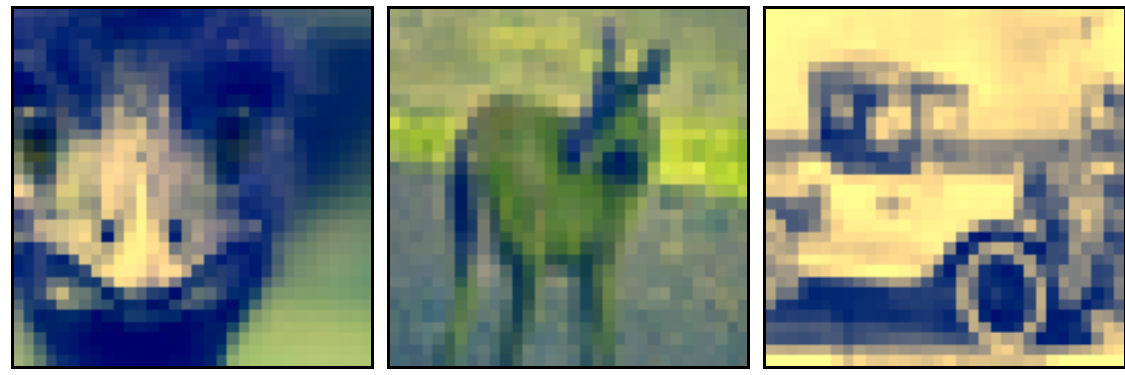} } %
    \parbox{0.49\linewidth}{ \centering {\tiny Small noise (CIFAR10)}
        \includegraphics[width=\linewidth]{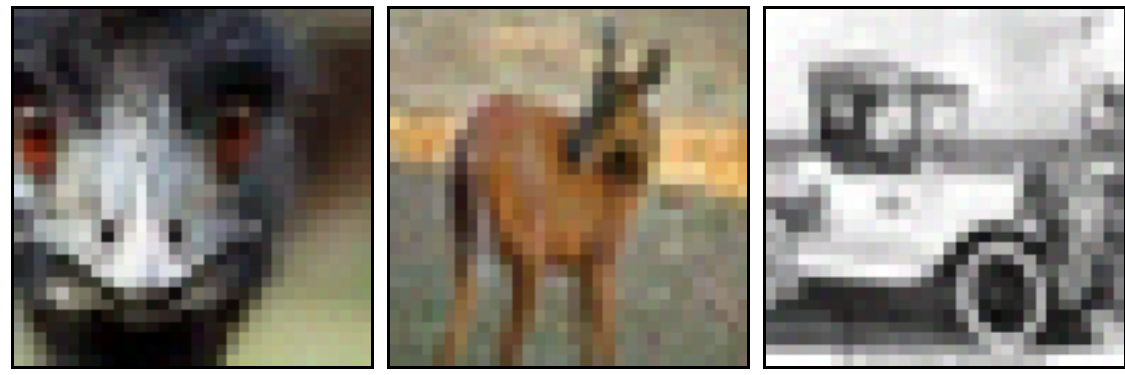} }
    
    \parbox{0.49\linewidth}{ \centering {\tiny QuickDraw}
        \includegraphics[width=\linewidth]{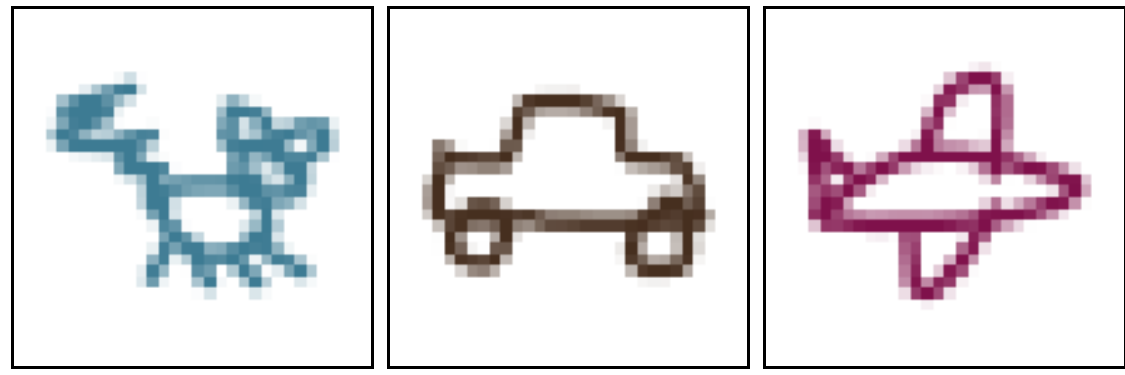} } %
    \parbox{0.49\linewidth}{ \centering {\tiny ImageNet}
        \includegraphics[width=\linewidth]{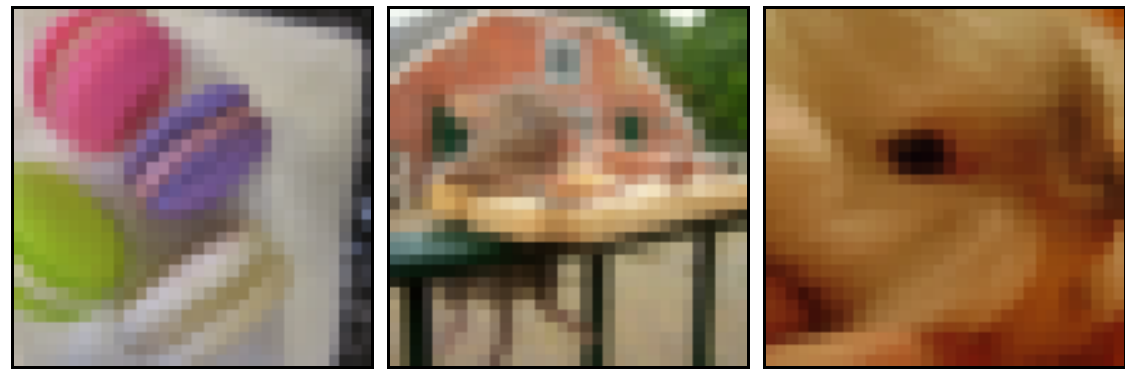} }
    
    \caption{Examples from each OoD dataset used in the evaluation. The inlier data are original CIFAR10 images.\vspace{-2mm}}
    \label{fig:ood_datasets}
\end{wrapfigure}

In the following, we describe the scores used in Table~\ref{tab:results_cifar10}.\\ 
{\bf Bits/dim:} 
The bits/dim metric is common for objectively comparing the performance of density estimation models such as normalizing flows,
and is closely related to the KL divergence between real and estimated distributions.
Details can be found e.g. in \cite{theis2015note}.\\
{\bf Calibration error:} 
The calibration curve measures whether the confidence of a model agrees with its actual performance.
All prediction outputs are binned according to their predicted probability $P$ (`\emph{confidence}'),
and it is recorded which fraction of predictions in each bin was correct, $Q$.
For a perfectly calibrated model, we have $P=Q$, e.g.~predictions with 70\% confidence are correct 70\% of the time.
We use several metrics to measure deviations from this behaviour, largely in line with \citet{guo2017calibration}.
Specifically, we consider the expected calibration error (ECE, error weighted by bin count), 
the maximum calibration error (MCE, max error over all bins), 
and the integrated calibration error (ICE, summed error per bin),
as well as the geometric mean of all three:
$\sqrt[3]{\mathrm{ECE} \cdot \mathrm{MCE} \cdot \mathrm{ICE}}$.
The geometric mean is used because it properly accounts for the different magnitudes of the metrics. Exact definitions found in appendix. \\
%
{\bf Increased out-of-distribution (OoD) prediction entropy:}
For data that is OoD, we expect from a model that it returns uncertain class predictions,
as it has not been trained on such data.
In the ideal case, each class is assigned the same probability of $1/(\text{nr. classes})$.
\citet{ovadia2019can} quantify this through the discrete entropy of the class prediction outputs $H(Y|X_\mathrm{Ood})$.
To counteract the effect of less accurate models having higher prediction entropy overall,
we report the difference between OoD and in-distribution test set $H(Y|X_\mathrm{Ood}) - H(Y|X_\mathrm{In\;distrib.})$. \\
{\bf OoD detection score:}
We use OoD detection capabilities intrinsically built in to GCs.
For this, we apply the recently proposed typicality test \citep{nalisnick2019detecting}. 
This is a hypothesis test that sets an upper and lower threshold on the estimated likelihood, beyond which batches of inputs are classified as OoD.
We apply the test to single input images (i.e. batch size 1).
For quantification, we vary the detection threshold to produce a receiver operator characteristic (ROC),
and compute the area under this curve (ROC-AUC) in percent. For short, we call this the \emph{OoD detection score}.
It will be 100 for perfectly separated in- and outliers, and 50 if each point is assigned a random likelihood.\\
{\bf OoD datasets:}
The inlier dataset consist of CIFAR10/100 images, i.e. $32 \times 32$ colour images showing 10/100 object classes. 
Additionally, we created four different OoD datasets, that cover different aspects, see Fig.~\ref{fig:ood_datasets}.
Firstly, we create a random 3D rotation matrix with a rotation angle of $\alpha = 0.3\pi$, 
and apply it to the RGB color vectors of each pixel of CIFAR10 images. 
%
%
Secondly, we add random uniform noise with a small amplitude to CIFAR10 images, 
as an alteration of the image statistics.
%
Thirdly, we use the QuickDraw dataset of hand drawn objects \citep{ha2017neural}, and filter only the categories corresponding 
to CIFAR10 classes and color each grayscale line drawing randomly.
Therefore the semantic content is the same, but the image modality is different.
%
Lastly, we downscale the ImageNet validation set to $32 \times 32$ pixels.
In this case, the semantic content is different, but the image statistics are very similar to CIFAR10.

%

\begin{table*}
\caption{
Results on the CIFAR10 dataset.
All models have the same number of parameters and were trained with the same hyperparameters.
All values except entropy and overconfidence are given in percent.
The arrows indicate whether a higher or lower value is better. 
}
\label{tab:results_cifar10}
\vspace{-2mm}
\begin{center}
\resizebox{\textwidth}{!}{
\begin{tabular}{l|l"r"r"r|rrr"r|rrrr"r|rrrr}
\multicolumn{2}{c"}{Model} & Classif. & Bits/dim &
\multicolumn{4}{c"}{Calibration error ($\downarrow$)} & 
\multicolumn{5}{c"}{Incr. OoD prediction entropy ($\uparrow$)} & 
\multicolumn{5}{c}{OoD detection score ($\uparrow$)} \\
\multicolumn{2}{l"}{ } & err. ($\downarrow$)& ($\downarrow$)& 
{\tiny Geo.~mean} & {\tiny ECE} & {\tiny MCE} & {\tiny ICE} & 
{\tiny Average} & {\tiny RGB-rot} & {\tiny Draw} & {\tiny Noise} &{\tiny ImgNet}& 
{\tiny Average} & {\tiny RGB-rot} & {\tiny Draw} & {\tiny Noise} &{\tiny ImgNet}\\
\thickhline 
\multirow{3}{*}{\shortstack[c]{IB-INN \\ (ours)}}
 & $\bbeta = 1$ & 
                         10.27 &          5.25 &
      \bf 1.26 &     \bf  0.54 &      \bf 3.25 &     \bf  1.13 &
       \bf 0.38 &      \bf 0.43 &      \bf 0.40 &      \bf 0.10 &      \bf 0.61 &
         68.76 &     \bf 78.80 &         67.30 &         77.19 &         54.59 \\
 & only $\mathcal{L}_X$ ($\bbeta = 0$) &
                            -- &      \bf 4.80 &
            -- &            -- &            -- &            -- &
            -- &            -- &            -- &            -- &            -- &
         74.51 &         70.68 &         85.74 &         91.14 &         55.82 \\
 & only $\mathcal{L}_Y$ (\small $\bbeta \rightarrow \infty$) &
                          8.72 &         17.27 &
          3.98 &          0.81 &         13.94 &          5.57 &
           0.28 &          0.23 &     \bf  0.40 &          0.00 &          0.49 &
         61.25 &         57.04 &     \bf 90.29 &         50.24 &         54.40 \\
\hline
\multirow{2}{*}{Stand.~GC}
 & Class-NLL &
                         61.75 &          4.81 &
         12.61 &          4.17 &         30.58 &         15.70 &
           0.03 &          0.02 &         -0.06 &          0.02 &         0.12  &
     \bf 73.92 &         70.65 &         83.31 &     \bf 90.97 &         55.76 \\
 & Class-NLL + regul.&
                         40.04 &          4.83 &
         24.75 &          7.13 &         70.63 &         30.11 &
           0.01 &          0.00 &         -0.01 &          0.01 &         0.02  &
         74.02 &         69.33 &         85.13 &         91.04 &         55.88 \\
\hline
\multirow{3}{*}{Pure DC}
 & VIB ($\bbeta = 1$) &
                          6.83 &            -- &
          6.66 &          0.81 &         26.56 &         13.75 &
           0.17 &          0.14 &          0.23 &          0.00 &          0.32 &
   -- & -- & -- & -- & -- \\
 & ResNet &
                     \bf  6.51 &            -- &
          6.23 &          0.76 &         29.29 &         10.92 &
           0.18 &          0.16 &          0.20 &          0.00 &          0.34 &
            -- &            -- &            -- &            -- &            -- \\
 & i-RevNet & 
                          9.22 &            -- &
          4.19 &          0.79 &         16.68 &          5.54 &
           0.24 &          0.09 &          0.38 &          0.00 &          0.51 &
            -- &            -- &            -- &            -- &            -- \\
\end{tabular}
}
\end{center}
\vspace{-1mm}
\end{table*}

\spacePreSub
\subsection{Results}
\spacePostSub

\begin{figure*}
\centering
\vspace{-3mm}
\includegraphics[width=\textwidth]{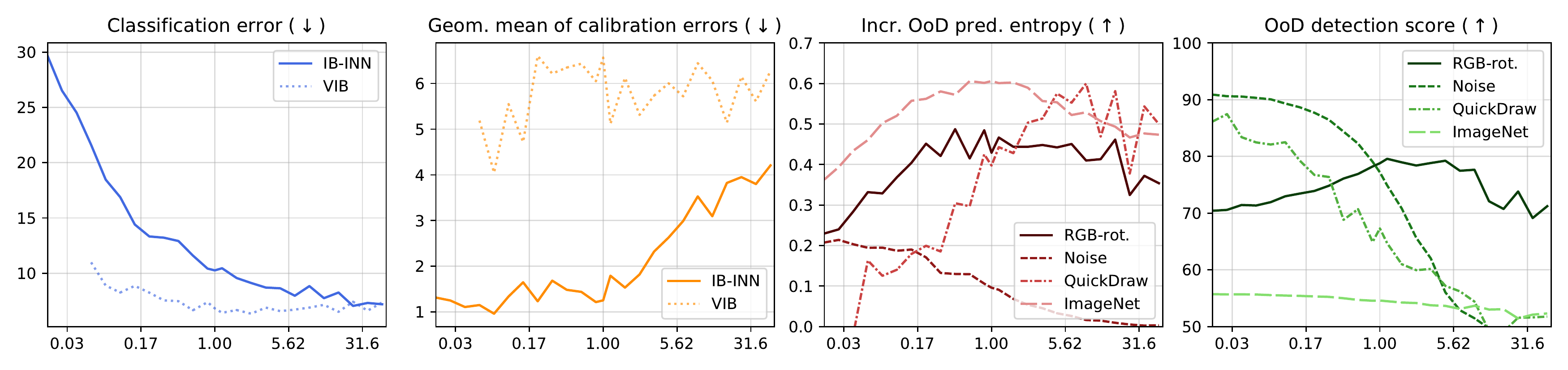}
\vspace{-6mm}
\caption{
Effect of changing the parameter $\bbeta$ between $0.02$ and $50$ (logarithmic $x$-axis) 
on the different performance measures ($y$-axis).
The left two plots show the IB-INN and VIB, the right two plots only show the IB-INN.
The VIB does not converge for $\bbeta < 0.05$.
The arrows indicate if a larger or smaller score is better.
While classification accuracy improves with $\bbeta$, the uncertainty measures generally grow worse.
The trend of OoD detection and OoD entropy is less clear,
and depends on the OoD dataset.
The special case $\beta = 1$ (class-NLL) translates to $\bbeta \approx 3\cdot 10^{-4}$
(cf.~Table~\ref{tab:results_cifar10}).
}
\vspace{-5mm}
\label{fig:beta_effect}
\end{figure*}

\textbf{Quantitative Model Comparison}
A comparison of all models is performed in Table \ref{tab:results_cifar10} for CIFAR10, and in the appendix for CIFAR100.
At the extreme $\bbeta \to \infty$, the model behaves almost identically to a standard feed forward classifier using the same architecture (i-RevNet),
and for $\bbeta = 0$, it closely mirrors a conventionally trained GC, as the bits/dim are the same.
We find the most favourable setting to be at $\bbeta =1$: 
Here, the classification error and the bits/dim each only suffer a 10\% penalty compared to the extremes.
The uncertainty quantification for IB-INN at this setting (calibration and OoD prediction entropy) is far better than for pure DCs.
Against expectations, standard GCs have worse calibration error.
Our hypothesis is that their predictions are too noisy and inaccurate for a positive effect to be visible.
For OoD detection, the IB-INN and standard GCs are all comparable, 
as we would expect from the similar bits/dim.
Fig.~\ref{fig:beta_effect} shows the trade-off between the two extremes in more detail:
at low $\bbeta$, the OoD detection and uncertainty quantification are improved, at the cost of classification accuracy.
The VIB behaves in agreement with the other DCs: 
it has consistently lower classification error but higher calibration error than the IB-INN.
This confirms that the IB-INN's behaviour is due to the application of IB to GCs exclusively.
This does not mean that the IB-INN should be preferred over VIB, or vice versa.
The main advantages of the VIB are the increased robustness to overfitting and adversarial attacks,
aspects that we do not examine in this work.\\
\textbf{Latent Space Exploration}
To better understand what the IB-INN learns, we analyze the latent space in different ways.
Firstly, Fig.~\ref{fig:latent_space_layout} shows the layout of the latent space GMM through a linear projection. 
We find that the clusters of ambiguous classes, e.g.~truck and car, are connected in latent space, to account for uncertainty. 
Secondly, Fig.~\ref{fig:beta_effect_interpolations} shows interpolations in latent space between two test set images, 
using models trained with different values of $\bbeta$.
We observe that for low $\bbeta$, the IB-INN has a well structured latent space, leading to good generative capabilities and plausible interpolations.
For larger $\bbeta$, class separation increases and interpolation quality continually degrades.
Finally, generated images can give insight into the classification process, visualizing how the model understands each class.
If a certain feature is not generated, this means it does not contribute positively to the likelihood, 
and in turn will be ignored for classification. Examples for this are shown in Fig.~\ref{fig:FROGS}.


\begin{figure}
          \centering
          \parbox{1.\linewidth}{
              \parbox{0.245\linewidth}{
              \centering
                {\small $\bbeta = 0.02$}
                \includegraphics[width=\linewidth]{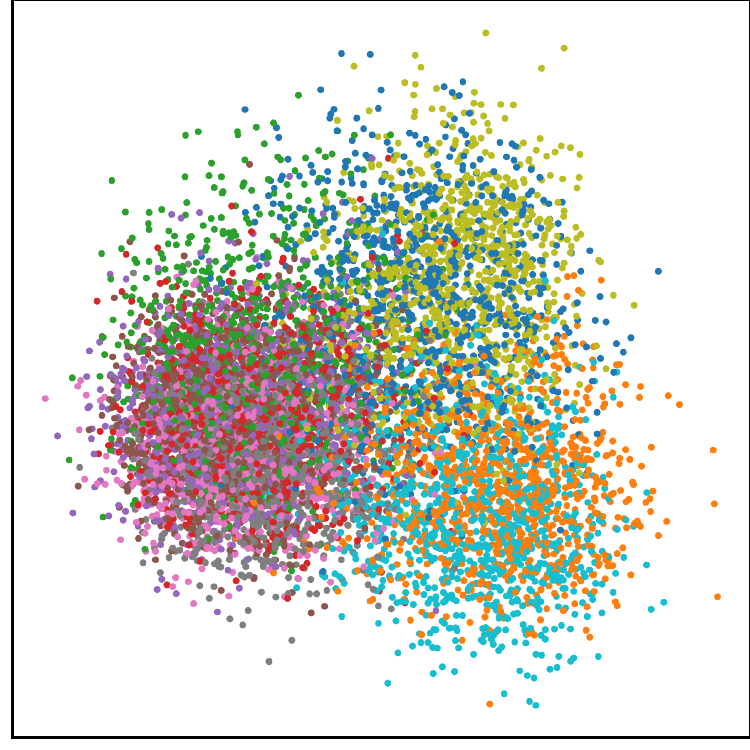} 
                \vspace{-4mm}
                } %
              \hfill %
              \parbox{0.245\linewidth}{
              \centering
                 {\small $\bbeta = 1.2$} 
                \includegraphics[width=\linewidth]{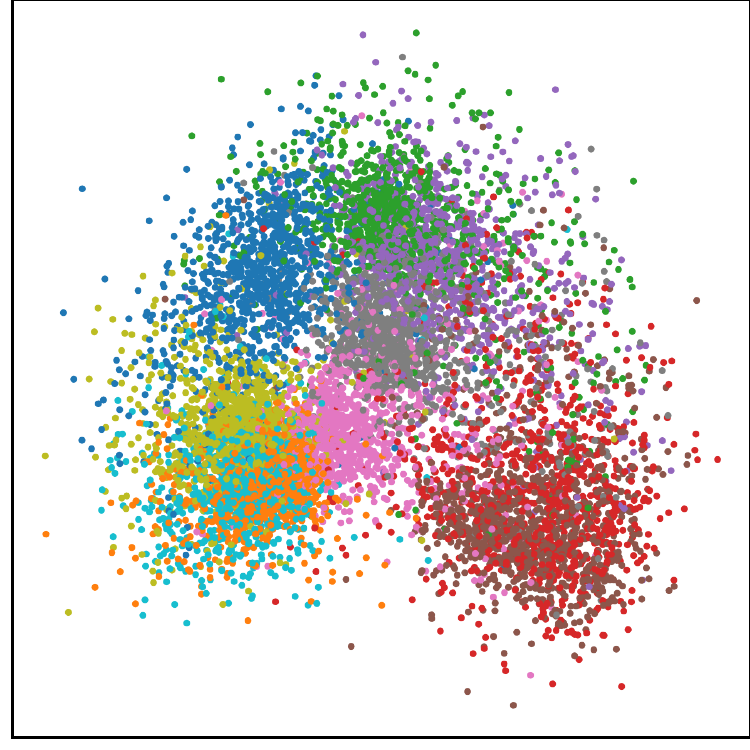} 
                \vspace{-4mm}
                 }%
              \parbox{0.245\linewidth}{
              \centering
                 {\small $\bbeta = 18.05$} 
                \includegraphics[width=\linewidth]{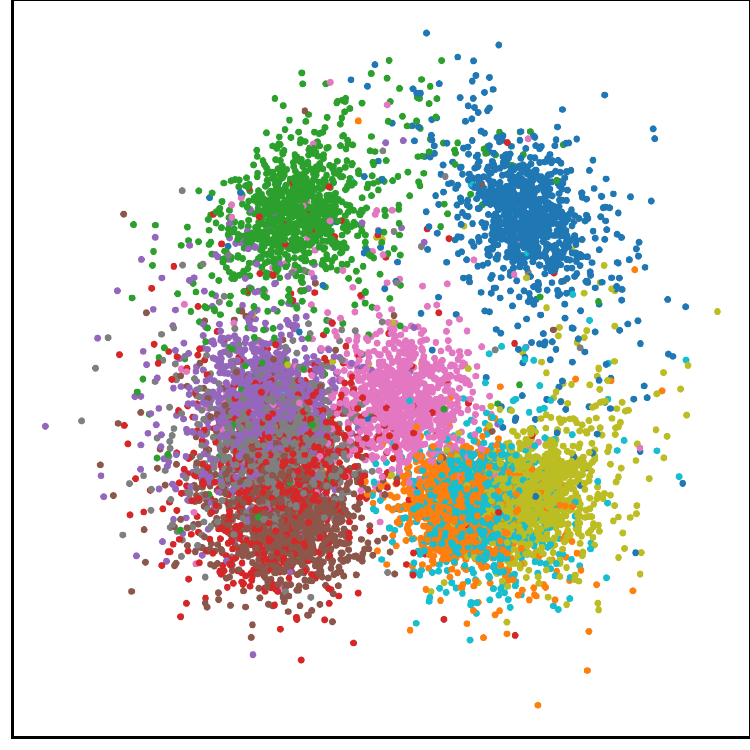} 
                \vspace{-4mm}
                 }
               \hfill %
              \parbox{0.245\linewidth}{
              \centering
                 {\small $\bbeta = 35.65$} 
                \includegraphics[width=1.0\linewidth]{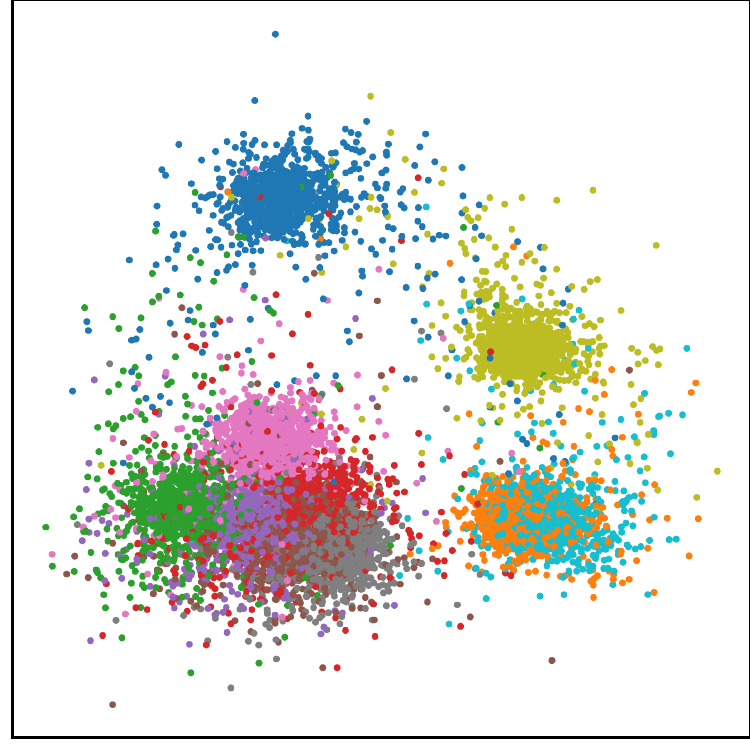} 
                \vspace{-4mm}
                 }
               \hfill
               \centering
           }
         \includegraphics[width=0.8\linewidth]{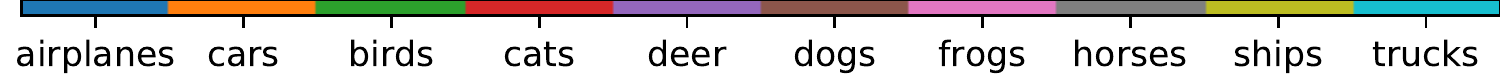} 
         \caption{GMM Latent space behaviour by increasing $\bbeta$. The class separation increases with larger $\bbeta$. 
         Note that ambiguous classes (e.g. truck and car) remain connected to account for uncertainty.\vspace{-5mm}}
         \label{fig:latent_space_layout}
    
\end{figure}

\begin{figure*}
    \parbox{0.53\linewidth}{
        \centering
        \begin{tikzpicture}
        \node[anchor=east, align=left] at (-1.13,  0 * 1.05) {Horses};
        \node[anchor=east, align=left] at (-1.13, -1 * 1.05) {Ships};
        \node[anchor=east, align=left] at (-1.13, -2 * 1.05) {Frogs};
        \node[anchor=west]             at (-1.13,  0 * 1.05) {\includegraphics[width=0.81\linewidth]{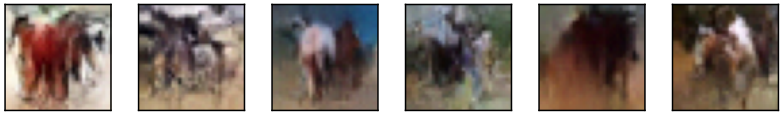}};
        \node[anchor=west]             at (-1.13, -1 * 1.05) {\includegraphics[width=0.81\linewidth]{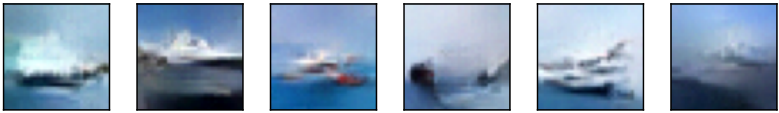}};
        \node[anchor=west]             at (-1.13, -2 * 1.05) {\includegraphics[width=0.81\linewidth]{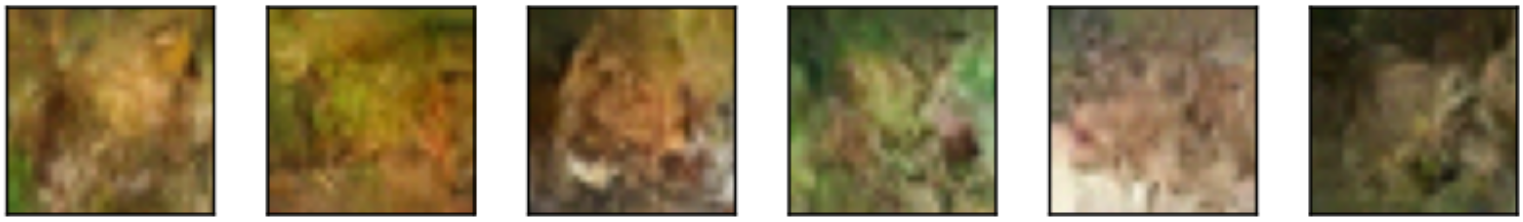}};
        
        \end{tikzpicture}
        \captionof{figure}{
            Images are generated for three different classes, by sampling from the respective mixture component in latent space, and inverting the network
            (more examples in Appendix).
            This gives insight what happens during classification, see text:
            only textures are generated for the frog class, 
            indicating that this is the only aspect used for classification.
            \vspace{-8mm}
        }
        \label{fig:FROGS}
    }
    \hfill 
    \parbox{0.45\linewidth}{
        \centering
        \includegraphics[width=0.95\linewidth]{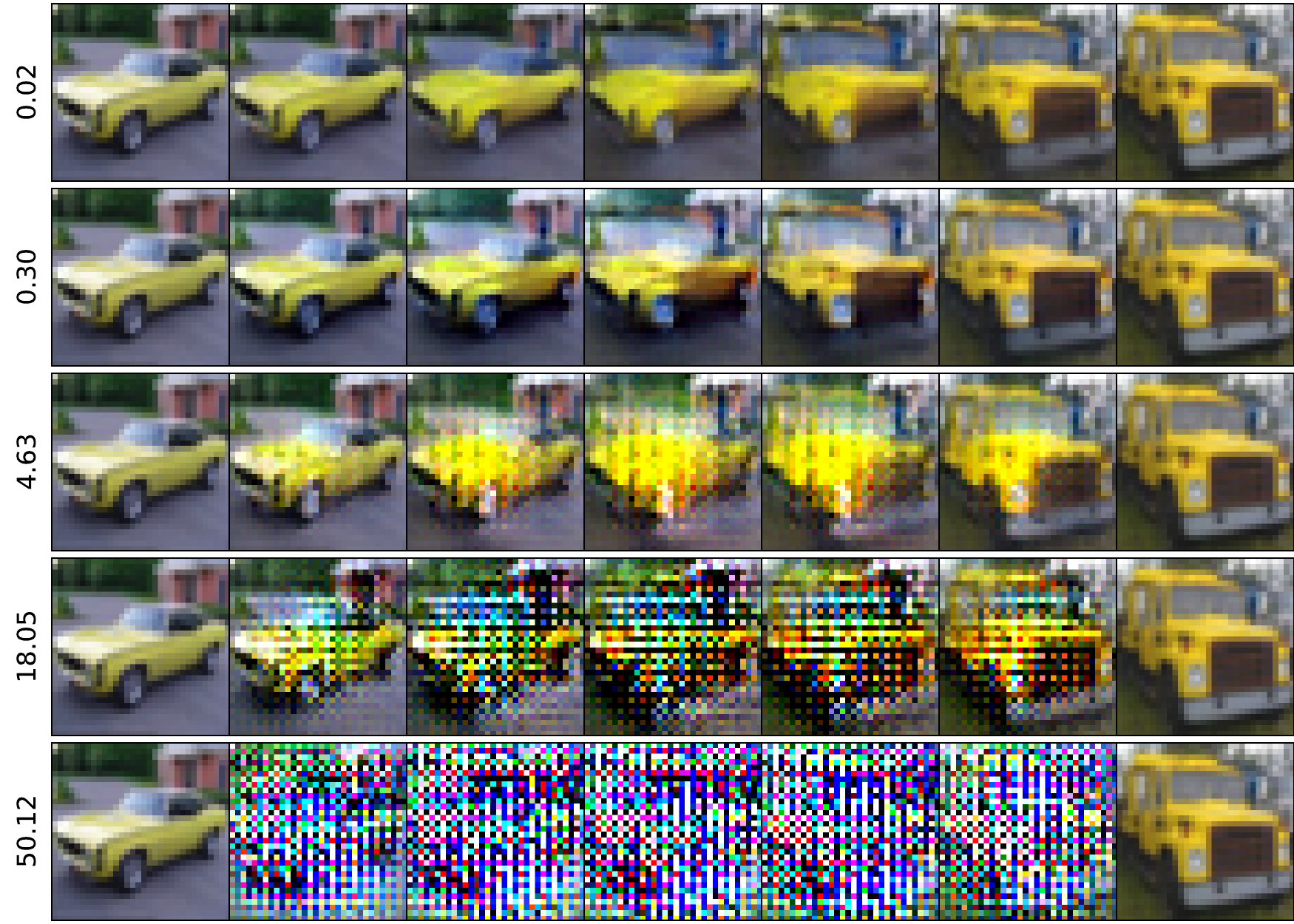}
        \captionof{figure}{
        The columns show a latent space interpolation between two images (leftmost and rightmost).
        Each row shows a model with a different $\bbeta$.
        \vspace{-8mm}
        }
        \label{fig:beta_effect_interpolations}
    }
\end{figure*}

\vspace{-3mm}
\section{Conclusions}
\vspace{-3mm}
We addressed the application of the Information Bottleneck (IB) as a loss function to Invertible Neural Networks (INNs) trained as generative models.
We find that we can formulate an asymptotically exact version of the IB, which results in an INN that is a generative classifier.
From our experiments, we conclude that the IB-INN provides high quality uncertainties and out-of-distribution detection,
while reaching almost the same classification accuracy as standard feed-forward methods on CIFAR10 and CIFAR100. 

\newpage
\section*{Acknowledgements}
LA received funding by the Federal Ministry of Education and Research of Germany project High Performance Deep Learning Framework (No 01IH17002). 
RM received funding from the Robert Bosch PhD scholarship. 
UK and CR received financial support from the European Research Council (ERC) under the European Unions Horizon 2020 research and innovation program (grant agreement No 647769). 
We thank the Center for Information Services and High Performance Computing (ZIH) at Dresden University of Technology for generous allocations of computation time. 
Furthermore we thank our colleagues (in alphabetical order) Tim Adler, Felix Draxler, Clemens Frub\"ose, Jakob Kruse, Titus Leistner, Jens M\"uller and Peter Sorrenson 
for their help and fruitful discussions.

%


\bibliographystyle{icml2020}
\bibliography{bibliography}

\begin{thebibliography}{54}
\providecommand{\natexlab}[1]{#1}
\providecommand{\url}[1]{\texttt{#1}}
\expandafter\ifx\csname urlstyle\endcsname\relax
  \providecommand{\doi}[1]{doi: #1}\else
  \providecommand{\doi}{doi: \begingroup \urlstyle{rm}\Url}\fi

\bibitem[Achille \& Soatto(2018)Achille and Soatto]{achille2018information}
Achille, A. and Soatto, S.
\newblock Information dropout: Learning optimal representations through noisy
  computation.
\newblock \emph{{IEEE} Trans. Pattern Anal. Mach. Intell.}, 40\penalty0
  (12):\penalty0 2897--2905, 2018.
\newblock \doi{10.1109/TPAMI.2017.2784440}.
\newblock URL \url{https://doi.org/10.1109/TPAMI.2017.2784440}.

\bibitem[Alemi et~al.(2017)Alemi, Fischer, Dillon, and
  Murphy]{DBLP:conf/iclr/AlemiFD017}
Alemi, A.~A., Fischer, I., Dillon, J.~V., and Murphy, K.
\newblock Deep variational information bottleneck.
\newblock In \emph{5th International Conference on Learning Representations,
  {ICLR} 2017, Toulon, France, April 24-26, 2017, Conference Track
  Proceedings}, 2017.
\newblock URL \url{https://openreview.net/forum?id=HyxQzBceg}.

\bibitem[Amjad \& Geiger(2018)Amjad and Geiger]{amjad2018not}
Amjad, R.~A. and Geiger, B.~C.
\newblock How (not) to train your neural network using the information
  bottleneck principle.
\newblock \emph{arXiv preprint arXiv:1802.09766v3}, 2018.
\newblock URL \url{http://arxiv.org/abs/1802.09766v3}.

\bibitem[Ardizzone et~al.(2019)Ardizzone, L{\"{u}}th, Kruse, Rother, and
  K{\"{o}}the]{ardizzone2019guided}
Ardizzone, L., L{\"{u}}th, C., Kruse, J., Rother, C., and K{\"{o}}the, U.
\newblock Guided image generation with conditional invertible neural networks.
\newblock \emph{CoRR}, abs/1907.02392, 2019.
\newblock URL \url{http://arxiv.org/abs/1907.02392}.

\bibitem[Barber \& Agakov(2003)Barber and Agakov]{barber2003algorithm}
Barber, D. and Agakov, F.~V.
\newblock The im algorithm: a variational approach to information maximization.
\newblock In \emph{Advances in neural information processing systems}, pp.\
  None, 2003.

\bibitem[Behrmann et~al.(2020)Behrmann, Vicol, Wang, Grosse, and
  Jacobsen]{behrmann2020understanding}
Behrmann, J., Vicol, P., Wang, K.-C., Grosse, R., and Jacobsen, J.-H.
\newblock Understanding and mitigating exploding inverses in invertible neural
  networks.
\newblock \emph{arXiv preprint arXiv:2006.09347}, 2020.

\bibitem[Belghazi et~al.(2018)Belghazi, Rajeswar, Baratin, Hjelm, and
  Courville]{DBLP:journals/corr/abs-1801-04062}
Belghazi, I., Rajeswar, S., Baratin, A., Hjelm, R.~D., and Courville, A.~C.
\newblock {MINE:} mutual information neural estimation.
\newblock \emph{CoRR}, abs/1801.04062, 2018.
\newblock URL \url{http://arxiv.org/abs/1801.04062}.

\bibitem[Bishop \& Lasserre(2007)Bishop and Lasserre]{Lasserre2007GenerativeOD}
Bishop, C. and Lasserre, J.
\newblock Generative or discriminative? getting the best of both worlds.
\newblock \emph{Bayesian Statistics}, 8, January 2007.
\newblock URL
  \url{https://www.microsoft.com/en-us/research/publication/generative-discriminative-getting-best-worlds/}.

\bibitem[Bishop(2007)]{DBLP:books/lib/Bishop07}
Bishop, C.~M.
\newblock \emph{Pattern recognition and machine learning, 5th Edition}.
\newblock Information science and statistics. Springer, 2007.
\newblock ISBN 9780387310732.

\bibitem[Bogachev et~al.(2005)Bogachev, Kolesnikov, and
  Medvedev]{bogachev2005triangular}
Bogachev, V.~I., Kolesnikov, A.~V., and Medvedev, K.~V.
\newblock Triangular transformations of measures.
\newblock \emph{Sbornik: Mathematics}, 196\penalty0 (3):\penalty0 309, 2005.

\bibitem[Bouchard \& Triggs(2004)Bouchard and Triggs]{bouchard2004tradeoff}
Bouchard, G. and Triggs, B.
\newblock The tradeoff between generative and discriminative classifiers.
\newblock In \emph{16th IASC International Symposium on Computational
  Statistics (COMPSTAT'04)}, pp.\  721--728, 2004.

\bibitem[Chechik et~al.(2005)Chechik, Globerson, Tishby, and
  Weiss]{chechik2005information}
Chechik, G., Globerson, A., Tishby, N., and Weiss, Y.
\newblock Information bottleneck for gaussian variables.
\newblock \emph{J. Mach. Learn. Res.}, 6:\penalty0 165--188, 2005.
\newblock URL \url{http://jmlr.org/papers/v6/chechik05a.html}.

\bibitem[Chongxuan et~al.(2017)Chongxuan, Xu, Zhu, and
  Zhang]{chongxuan2017triple}
Chongxuan, L., Xu, T., Zhu, J., and Zhang, B.
\newblock Triple generative adversarial nets.
\newblock In \emph{Advances in neural information processing systems}, pp.\
  4088--4098, 2017.

\bibitem[Cover \& Thomas(2012)Cover and Thomas]{cover2012elements}
Cover, T.~M. and Thomas, J.~A.
\newblock \emph{Elements of information theory}.
\newblock John Wiley \& Sons, 2012.

\bibitem[De~Bruijn(1981)]{de1981asymptotic}
De~Bruijn, N.~G.
\newblock \emph{Asymptotic methods in analysis}, volume~4.
\newblock Courier Corporation, 1981.

\bibitem[Dinh et~al.(2017)Dinh, Sohl{-}Dickstein, and Bengio]{dinh2016density}
Dinh, L., Sohl{-}Dickstein, J., and Bengio, S.
\newblock Density estimation using real {NVP}.
\newblock In \emph{5th International Conference on Learning Representations,
  {ICLR} 2017, Toulon, France, April 24-26, 2017, Conference Track
  Proceedings}, 2017.
\newblock URL \url{https://openreview.net/forum?id=HkpbnH9lx}.

\bibitem[Fetaya et~al.(2019)Fetaya, Jacobsen, and Zemel]{fetaya2019conditional}
Fetaya, E., Jacobsen, J., and Zemel, R.~S.
\newblock Conditional generative models are not robust.
\newblock \emph{CoRR}, abs/1906.01171v1, 2019.
\newblock URL \url{http://arxiv.org/abs/1906.01171v1}.

\bibitem[Friedman et~al.(2013)Friedman, Mosenzon, Slonim, and
  Tishby]{friedman2001multivariate}
Friedman, N., Mosenzon, O., Slonim, N., and Tishby, N.
\newblock Multivariate information bottleneck.
\newblock \emph{CoRR}, abs/1301.2270, 2013.
\newblock URL \url{http://arxiv.org/abs/1301.2270}.

\bibitem[Gilad{-}Bachrach et~al.(2003)Gilad{-}Bachrach, Navot, and
  Tishby]{gilad2003information}
Gilad{-}Bachrach, R., Navot, A., and Tishby, N.
\newblock An information theoretic tradeoff between complexity and accuracy.
\newblock In \emph{Computational Learning Theory and Kernel Machines, 16th
  Annual Conference on Computational Learning Theory and 7th Kernel Workshop,
  COLT/Kernel 2003, Washington, DC, USA, August 24-27, 2003, Proceedings}, pp.\
   595--609, 2003.
\newblock \doi{10.1007/978-3-540-45167-9\_43}.
\newblock URL \url{https://doi.org/10.1007/978-3-540-45167-9\_43}.

\bibitem[Grathwohl et~al.(2019)Grathwohl, Wang, Jacobsen, Duvenaud, Norouzi,
  and Swersky]{grathwohl2019your}
Grathwohl, W., Wang, K.-C., Jacobsen, J.-H., Duvenaud, D., Norouzi, M., and
  Swersky, K.
\newblock Your classifier is secretly an energy based model and you should
  treat it like one.
\newblock \emph{arXiv preprint arXiv:1912.03263}, 2019.

\bibitem[Guo et~al.(2017)Guo, Pleiss, Sun, and Weinberger]{guo2017calibration}
Guo, C., Pleiss, G., Sun, Y., and Weinberger, K.~Q.
\newblock On calibration of modern neural networks.
\newblock In \emph{Proceedings of the 34th International Conference on Machine
  Learning, {ICML} 2017, Sydney, NSW, Australia, 6-11 August 2017}, pp.\
  1321--1330, 2017.
\newblock URL \url{http://proceedings.mlr.press/v70/guo17a.html}.

\bibitem[Ha \& Eck(2018)Ha and Eck]{ha2017neural}
Ha, D. and Eck, D.
\newblock A neural representation of sketch drawings.
\newblock In \emph{6th International Conference on Learning Representations,
  {ICLR} 2018, Vancouver, BC, Canada, April 30 - May 3, 2018, Conference Track
  Proceedings}, 2018.
\newblock URL \url{https://openreview.net/forum?id=Hy6GHpkCW}.

\bibitem[He et~al.(2016)He, Zhang, Ren, and Sun]{he2016deep}
He, K., Zhang, X., Ren, S., and Sun, J.
\newblock Deep residual learning for image recognition.
\newblock In \emph{2016 {IEEE} Conference on Computer Vision and Pattern
  Recognition, {CVPR} 2016, Las Vegas, NV, USA, June 27-30, 2016}, pp.\
  770--778, 2016.
\newblock \doi{10.1109/CVPR.2016.90}.
\newblock URL \url{https://doi.org/10.1109/CVPR.2016.90}.

\bibitem[Huang et~al.(2018)Huang, Krueger, Lacoste, and
  Courville]{huang2018neural}
Huang, C.-W., Krueger, D., Lacoste, A., and Courville, A.
\newblock Neural autoregressive flows.
\newblock \emph{arXiv preprint arXiv:1804.00779}, 2018.

\bibitem[Hyv{\"a}rinen \& Pajunen(1999)Hyv{\"a}rinen and
  Pajunen]{hyvarinen1999nonlinear}
Hyv{\"a}rinen, A. and Pajunen, P.
\newblock Nonlinear independent component analysis: Existence and uniqueness
  results.
\newblock \emph{Neural Networks}, 12\penalty0 (3):\penalty0 429--439, 1999.

\bibitem[Jacobsen et~al.(2018)Jacobsen, Smeulders, and
  Oyallon]{jacobsen2018irevnet}
Jacobsen, J., Smeulders, A. W.~M., and Oyallon, E.
\newblock i-revnet: Deep invertible networks.
\newblock In \emph{6th International Conference on Learning Representations,
  {ICLR} 2018, Vancouver, BC, Canada, April 30 - May 3, 2018, Conference Track
  Proceedings}, 2018.
\newblock URL \url{https://openreview.net/forum?id=HJsjkMb0Z}.

\bibitem[Jacobsen et~al.(2019)Jacobsen, Behrmann, Zemel, and
  Bethge]{jacobsen2018excessive}
Jacobsen, J., Behrmann, J., Zemel, R.~S., and Bethge, M.
\newblock Excessive invariance causes adversarial vulnerability.
\newblock In \emph{7th International Conference on Learning Representations,
  {ICLR} 2019, New Orleans, LA, USA, May 6-9, 2019}, 2019.
\newblock URL \url{https://openreview.net/forum?id=BkfbpsAcF7}.

\bibitem[Jaini et~al.(2019)Jaini, Selby, and Yu]{jaini2019sum}
Jaini, P., Selby, K.~A., and Yu, Y.
\newblock Sum-of-squares polynomial flow.
\newblock \emph{arXiv preprint arXiv:1905.02325}, 2019.

\bibitem[Kingma \& Dhariwal(2018)Kingma and Dhariwal]{kingma2018glow}
Kingma, D.~P. and Dhariwal, P.
\newblock Glow: Generative flow with invertible 1x1 convolutions.
\newblock In \emph{Advances in Neural Information Processing Systems 31: Annual
  Conference on Neural Information Processing Systems 2018, NeurIPS 2018, 3-8
  December 2018, Montr{\'{e}}al, Canada}, pp.\  10236--10245, 2018.
\newblock URL
  \url{http://papers.nips.cc/paper/8224-glow-generative-flow-with-invertible-1x1-convolutions}.

\bibitem[Kingma et~al.(2014)Kingma, Mohamed, Rezende, and
  Welling]{kingma2014semi}
Kingma, D.~P., Mohamed, S., Rezende, D.~J., and Welling, M.
\newblock Semi-supervised learning with deep generative models.
\newblock In \emph{Advances in neural information processing systems}, pp.\
  3581--3589, 2014.

\bibitem[Kolchinsky et~al.(2017)Kolchinsky, Tracey, and
  Wolpert]{kolchinsky2017nonlinear}
Kolchinsky, A., Tracey, B.~D., and Wolpert, D.~H.
\newblock Nonlinear information bottleneck.
\newblock \emph{CoRR}, abs/1705.02436, 2017.
\newblock URL \url{http://arxiv.org/abs/1705.02436}.

\bibitem[L'Ecuyer(1995)]{l1995note}
L'Ecuyer, P.
\newblock Note: On the interchange of derivative and expectation for likelihood
  ratio derivative estimators.
\newblock \emph{Management Science}, 41\penalty0 (4):\penalty0 738--747, 1995.

\bibitem[Li et~al.(2019)Li, Bradshaw, and Sharma]{li2018generative}
Li, Y., Bradshaw, J., and Sharma, Y.
\newblock Are generative classifiers more robust to adversarial attacks?
\newblock In \emph{Proceedings of the 36th International Conference on Machine
  Learning, {ICML} 2019, 9-15 June 2019, Long Beach, California, {USA}}, pp.\
  3804--3814, 2019.
\newblock URL \url{http://proceedings.mlr.press/v97/li19a.html}.

\bibitem[Nalisnick et~al.(2019{\natexlab{a}})Nalisnick, Matsukawa, Teh, and
  Lakshminarayanan]{nalisnick2019detecting}
Nalisnick, E., Matsukawa, A., Teh, Y.~W., and Lakshminarayanan, B.
\newblock Detecting out-of-distribution inputs to deep generative models using
  a test for typicality.
\newblock \emph{arXiv preprint arXiv:1906.02994}, 2019{\natexlab{a}}.

\bibitem[Nalisnick et~al.(2019{\natexlab{b}})Nalisnick, Matsukawa, Teh,
  G{\"{o}}r{\"{u}}r, and Lakshminarayanan]{nalisnick2018deep}
Nalisnick, E.~T., Matsukawa, A., Teh, Y.~W., G{\"{o}}r{\"{u}}r, D., and
  Lakshminarayanan, B.
\newblock Do deep generative models know what they don't know?
\newblock In \emph{7th International Conference on Learning Representations,
  {ICLR} 2019, New Orleans, LA, USA, May 6-9, 2019}, 2019{\natexlab{b}}.
\newblock URL \url{https://openreview.net/forum?id=H1xwNhCcYm}.

\bibitem[Nalisnick et~al.(2019{\natexlab{c}})Nalisnick, Matsukawa, Teh,
  G{\"{o}}r{\"{u}}r, and Lakshminarayanan]{nalisnick2019hybrid}
Nalisnick, E.~T., Matsukawa, A., Teh, Y.~W., G{\"{o}}r{\"{u}}r, D., and
  Lakshminarayanan, B.
\newblock Hybrid models with deep and invertible features.
\newblock In \emph{Proceedings of the 36th International Conference on Machine
  Learning, {ICML} 2019, 9-15 June 2019, Long Beach, California, {USA}}, pp.\
  4723--4732, 2019{\natexlab{c}}.
\newblock URL \url{http://proceedings.mlr.press/v97/nalisnick19b.html}.

\bibitem[Newey \& McFadden(1994)Newey and McFadden]{newey1994large}
Newey, W.~K. and McFadden, D.
\newblock Large sample estimation and hypothesis testing.
\newblock \emph{Handbook of econometrics}, 4:\penalty0 2111--2245, 1994.

\bibitem[Ng \& Jordan(2001)Ng and Jordan]{ng2002discriminative}
Ng, A.~Y. and Jordan, M.~I.
\newblock On discriminative vs. generative classifiers: {A} comparison of
  logistic regression and naive bayes.
\newblock In \emph{Advances in Neural Information Processing Systems 14 [Neural
  Information Processing Systems: Natural and Synthetic, {NIPS} 2001, December
  3-8, 2001, Vancouver, British Columbia, Canada]}, pp.\  841--848, 2001.
\newblock URL
  \url{http://papers.nips.cc/paper/2020-on-discriminative-vs-generative-classifiers-a-comparison-of-logistic-regression-and-naive-bayes}.

\bibitem[Ovadia et~al.(2019)Ovadia, Fertig, Ren, Nado, Sculley, Nowozin,
  Dillon, Lakshminarayanan, and Snoek]{ovadia2019can}
Ovadia, Y., Fertig, E., Ren, J., Nado, Z., Sculley, D., Nowozin, S., Dillon,
  J.~V., Lakshminarayanan, B., and Snoek, J.
\newblock Can you trust your model's uncertainty? evaluating predictive
  uncertainty under dataset shift.
\newblock \emph{CoRR}, abs/1906.02530, 2019.
\newblock URL \url{http://arxiv.org/abs/1906.02530}.

\bibitem[Raina et~al.(2004)Raina, Shen, Mccallum, and
  Ng]{raina2004classification}
Raina, R., Shen, Y., Mccallum, A., and Ng, A.~Y.
\newblock Classification with hybrid generative/discriminative models.
\newblock In \emph{Advances in neural information processing systems}, pp.\
  545--552, 2004.

\bibitem[Schott et~al.(2019)Schott, Rauber, Bethge, and
  Brendel]{DBLP:conf/iclr/SchottRBB19}
Schott, L., Rauber, J., Bethge, M., and Brendel, W.
\newblock Towards the first adversarially robust neural network model on
  {MNIST}.
\newblock In \emph{7th International Conference on Learning Representations,
  {ICLR} 2019, New Orleans, LA, USA, May 6-9, 2019}, 2019.
\newblock URL \url{https://openreview.net/forum?id=S1EHOsC9tX}.

\bibitem[Serfling(2009)]{serfling2009approximation}
Serfling, R.~J.
\newblock \emph{Approximation theorems of mathematical statistics}, volume 162.
\newblock John Wiley \& Sons, 2009.

\bibitem[Shamir et~al.(2010)Shamir, Sabato, and Tishby]{shamir2010learning}
Shamir, O., Sabato, S., and Tishby, N.
\newblock Learning and generalization with the information bottleneck.
\newblock \emph{Theor. Comput. Sci.}, 411\penalty0 (29-30):\penalty0
  2696--2711, 2010.
\newblock \doi{10.1016/j.tcs.2010.04.006}.
\newblock URL \url{https://doi.org/10.1016/j.tcs.2010.04.006}.

\bibitem[Shwartz{-}Ziv \& Tishby(2017)Shwartz{-}Ziv and
  Tishby]{shwartz2017opening}
Shwartz{-}Ziv, R. and Tishby, N.
\newblock Opening the black box of deep neural networks via information.
\newblock \emph{CoRR}, abs/1703.00810, 2017.
\newblock URL \url{http://arxiv.org/abs/1703.00810}.

\bibitem[Szegedy et~al.(2016)Szegedy, Vanhoucke, Ioffe, Shlens, and
  Wojna]{szegedy2016rethinking}
Szegedy, C., Vanhoucke, V., Ioffe, S., Shlens, J., and Wojna, Z.
\newblock Rethinking the inception architecture for computer vision.
\newblock In \emph{2016 {IEEE} Conference on Computer Vision and Pattern
  Recognition, {CVPR} 2016, Las Vegas, NV, USA, June 27-30, 2016}, pp.\
  2818--2826, 2016.
\newblock \doi{10.1109/CVPR.2016.308}.
\newblock URL \url{https://doi.org/10.1109/CVPR.2016.308}.

\bibitem[Teshima et~al.(2020)Teshima, Ishikawa, Tojo, Oono, Ikeda, and
  Sugiyama]{teshima2020coupling}
Teshima, T., Ishikawa, I., Tojo, K., Oono, K., Ikeda, M., and Sugiyama, M.
\newblock Coupling-based invertible neural networks are universal
  diffeomorphism approximators.
\newblock \emph{arXiv preprint arXiv:2006.11469}, 2020.

\bibitem[Theis et~al.(2015)Theis, Oord, and Bethge]{theis2015note}
Theis, L., Oord, A. v.~d., and Bethge, M.
\newblock A note on the evaluation of generative models.
\newblock \emph{arXiv preprint arXiv:1511.01844}, 2015.

\bibitem[Tishby \& Zaslavsky(2015)Tishby and Zaslavsky]{tishby2015deep}
Tishby, N. and Zaslavsky, N.
\newblock Deep learning and the information bottleneck principle.
\newblock In \emph{2015 {IEEE} Information Theory Workshop, {ITW} 2015,
  Jerusalem, Israel, April 26 - May 1, 2015}, pp.\  1--5, 2015.
\newblock \doi{10.1109/ITW.2015.7133169}.
\newblock URL \url{https://doi.org/10.1109/ITW.2015.7133169}.

\bibitem[Tishby et~al.(2000)Tishby, Pereira, and Bialek]{tishby2000information}
Tishby, N., Pereira, F. C.~N., and Bialek, W.
\newblock The information bottleneck method.
\newblock \emph{CoRR}, physics/0004057, 2000.
\newblock URL \url{http://arxiv.org/abs/physics/0004057}.

\bibitem[Virmaux \& Scaman(2018)Virmaux and Scaman]{virmaux2018lipschitz}
Virmaux, A. and Scaman, K.
\newblock Lipschitz regularity of deep neural networks: analysis and efficient
  estimation.
\newblock In \emph{Advances in Neural Information Processing Systems}, pp.\
  3835--3844, 2018.

\bibitem[Wu et~al.(2019)Wu, Fischer, Chuang, and Tegmark]{wu2019learnability}
Wu, T., Fischer, I., Chuang, I., and Tegmark, M.
\newblock Learnability for the information bottleneck.
\newblock In \emph{ICLR 2019 Workshop LLD}, 2019.

\bibitem[Xu et~al.(2020)Xu, Zhao, Song, Stewart, and Ermon]{xu2020theory}
Xu, Y., Zhao, S., Song, J., Stewart, R., and Ermon, S.
\newblock A theory of usable information under computational constraints.
\newblock \emph{arXiv preprint arXiv:2002.10689}, 2020.

\bibitem[Xue \& Titterington(2010)Xue and Titterington]{xue2010generative}
Xue, J. and Titterington, D.~M.
\newblock On the generative-discriminative tradeoff approach: Interpretation,
  asymptotic efficiency and classification performance.
\newblock \emph{Computational Statistics {\&} Data Analysis}, 54\penalty0
  (2):\penalty0 438--451, 2010.
\newblock \doi{10.1016/j.csda.2009.09.011}.
\newblock URL \url{https://doi.org/10.1016/j.csda.2009.09.011}.

\bibitem[Yogatama et~al.(2017)Yogatama, Dyer, Ling, and
  Blunsom]{yogatama2017generative}
Yogatama, D., Dyer, C., Ling, W., and Blunsom, P.
\newblock Generative and discriminative text classification with recurrent
  neural networks.
\newblock \emph{CoRR}, abs/1703.01898, 2017.
\newblock URL \url{http://arxiv.org/abs/1703.01898}.

\end{thebibliography}

\clearpage
\renewcommand\thesection{\Alph{section}}
\addtocontents{toc}{\protect\setcounter{tocdepth}{3}}
\setcounter{section}{0}
\renewcommand*{\theHsection}{appendix.\the\value{section}}

\begin{center}
{
\LARGE \bf -- Appendix --
}
\end{center}
%

\tableofcontents
\hrulefill

\section{Proofs and Derivations}
\subsection{Assumptions}

\begin{assumption}
We assume that the the sample space $\mathcal{X}$ belonging to 
the input RV $X: \mathcal{X} \to \R $ is a compact domain in $\R^d$, and that $p(X \mid y)$ is absolutely continuous $\forall y \in \mathcal{Y}$, where $\mathcal{Y}$ is the set of available classes.
\end{assumption}
The compactness of $\mathcal{X}$ is the major aspect here. 
However, this is always fulfilled for image data, as the pixels can only take certain range of values,
and equally fulfilled for most other real-world datasets, 
as data representations, measurement devices, etc. only have a finite range.

\begin{assumption}
We assume $\INN$ is from a family of universal density estimators, as defined by Definition 3 in \cite{teshima2020coupling}.
Moreover, we assume the network parameter space $\Theta$ is a compact subdomain of $\R^n$,
$\INN$ and $J_\theta$ are uniformly bounded,
and that the lower bound of $|\det J_\theta|$ is $> 0$.
We also assume that $\INN$ and $J_\theta$ are continuous and differentiable in both $X$ and $\theta$.
\end{assumption}
This is a fairly mild set of assumptions, as it is fulfilled by construction 
with most existing INN architectures using standard multi-layer subnetworks.
See e.g. \citet{behrmann2020understanding,virmaux2018lipschitz} for details.
Specifically, it holds for our tanh-clamped coupling block design (see Appendix \ref{sec:append_network}).
Note that some properties directly follow from Assumption 2:
Firstly, as $J_\theta$ is uniformly bounded, this implies that $\INN$ is uniformly Lipschitz-continuous.
Second, using Assumption 1, the domain of $Z=\INN(X)$ is compact, and $p(Z)$ is absolutely continuous.

\subsection{Mutual Cross-Information as Estimator for MI}
In our case, we only require $CI(X, Z_\E)$ and $CI(Y, Z_\E)$, but we show the correspondence
for two unspecified random variables $U$, $V$, as it may be of general interest.
However, note that our estimator will likely not be particularly useful outside of our specific use-case, 
and other methods should be preferred \citep[e.g.~MINE,][]{DBLP:journals/corr/abs-1801-04062}.
Our approach has the specific advantage, that we estimate the MI of the model using the model itself.
For e.g. MINE, we would require three models, one generative model, and two models that only serve to estimate the MI.
Secondly, it is not clear how the large constant $d \log(\sigma)$ can be cancelled out using other approaches.

For the joint input space $\Omega = \mathcal{U} \times \mathcal{V}$,
we assume that $\mathcal{U}$ is a compact domain in $\R^d$, and $\mathcal{V}$ is either 
also a compact domain in $\R^l$ (Case 1), or discrete, i.e. a finite subset of $\N$ (Case 2).
In Case 1, we assume that $p(U,V)$ is absolutely continuous with respect to the Lebesgue measure,
and in Case 2, $p(U|v)$ is absolutely continuous for all values of $v \in \mathcal{V}$.
This is in agreement with Assumption 1.

In Case 1, $q(U)$, $q(V)$, $q(U, V)$,
the densities can all be modeled separately, by three flow networks
$\INN^{(U)}(u)$, $\INN^{(V)}(v)$, $\INN^{(UV)}(u, v)$.
Although in our formulation, we are later able to approximate the latter two through the first.

In Case 2, we only model $q(U|V)$, and assume that $q(V)$ is either known beforehand and set to $p(V)$ (e.g. label distribution),
or the probabilities are parametrized directly.
Either way, $q(U,V) = q(U|V) q(V)$ and 
$q(U) = \sum_{v\in \mathcal{V}} q(U, v)$.

\begin{numberProposition}{1}
Assume that the $q(.)$ densities can be chosen from a sufficiently rich model family (e.g. a universal density estimator). Then for every $\eta>0$ there is a model such that
\begin{equation}
    \big| I(U,V) - CI(U,V) \big| < \eta
\end{equation}
and $I(U,V) = CI(U,V)$ if $p(U,V) = q(U,V)$.
\end{numberProposition}

\begin{proof}
Writing out the definitions explicitly, and rearranging, we find
\begin{align}
    CI(U,V) &= I(U,V) + D_{\mathrm{KL}}\big( p(U, V) \big\| q(U, V) \big)  \nonumber \\ 
    & \quad - D_{\mathrm{KL}}\big( p(U) \big\| q(U) \big) - D_{\mathrm{KL}}\big( p(V) \big\| q(V) \big)
\end{align}
Shortening the KL terms to $D_1$, $D_2$ and $D_3$ for convenience:
\begin{align}
    | CI(U,V) - I(U,V) | & = | D_1 - D_2 - D_3 | \\
    & \leq D_1 + D_2 + D_3 \\
    & \leq 3 \max(D_1, D_2, D_3) 
\end{align}
At this point, we can simply apply results from measure transport: if the $g_{\theta}$ are from a family of universal density estimators, we can choose $\theta^*$
to make $\max(D_1, D_2, D_3)$ arbitrarily small by matching $p$ and $q$. 
This was shown in general for increasing triangular maps, e.g. in \citet{hyvarinen1999nonlinear}, Theorem 1 for an accessible proof, or \citet{bogachev2005triangular} for a more in-depth approach (specifically Corollary 4.2).
Generality was also proven for several concrete architectures, e.g.
\citet{teshima2020coupling,jaini2019sum,huang2018neural}.

For the second part of the Proposition, we note the following:
if $p(U,V) = q(U,V)$, we have $D_1 = D_2 = D_3 = 0$, and therefore $CI(U,V) = I(U,V)$.
\end{proof}

\subsection{Loss Function $\mathcal{L}_X$}
In the following, we use the subscript-notation for the cross entropy:
\begin{align}
h_q(U) = \Expec_{u \sim p(U)}\left[ - \log q(u) \right],
\end{align}
to avoid confusion with the joint entropy that arises with the usual notation ($h(p(U), q(U))$).

\begin{numberProposition}{2}
For the case given in the paper, that $Z_\E = \INN(X + \E)$, it holds that $I(X,Z_\E) \leq CI(X, Z_\E)$.
\end{numberProposition}

\begin{proof}
In the following, we first use the invariance of the (cross-)information to homeomorphic transforms
(see e.g.~\citet{cover2012elements} Sec.~8.6). 
Then, we use $p(X+\E|X) = q(X+\E|X) = p(\E)$ (known exactly) and write out all the terms, most of which cancel.
Finally, we use the inequality that the cross entropy is larger than the entropy, $h_q(U) \geq h(U)$ regardless of $q$.
The equality holds iff the two distributions are the same. 
\begin{align}
    CI(X, Z_\E) - I(X, Z_\E)  &= CI(X, X\!\!+\!\!\E) - I(X, X\!\!+\!\!\E) \\
    &= h_q(X) - h(X) + 0\\
    & \geq 0
\end{align}
With equality iff $p(X) = q(X)$.
\end{proof}

We now want to show that the network optimization procedure that arises from the empirical loss, in particular the gradients w.r.t.~network parameters $\theta$,
are consistent with those of $CI(X, Z_\E)$:

\begin{numberProposition}{3}
The defined loss is a consistent estimator for $CI(X, Z_\E)$ up to a known constant, and a consistent estimator for the gradients.
Specifically, 
for any $\epsilon_1,\epsilon_2 > 0$ and $0< \delta <1$ there are $\sigma_0 >0$ and $N_0 \in \N$, such that $\forall N \geq N_0$ and $\forall \sigma < \sigma_0$,

\begin{align*}
    \mathrm{Pr}\left(\left|\,
      CI(X,Z_\E) + d\log\sqrt{2\pi e \sigma^2}-\mathcal{L}_X^{(N)}
     \right| < \epsilon_1 \right) > 1\!-\!\delta
\end{align*}
and 
\begin{align*}
    \mathrm{Pr}\left(\left\|\,
     \frac{\partial}{\partial \theta}  CI(X,Z_\E)
     - \frac{\partial}{\partial \theta}  \mathcal{L}_X^{(N)}
     \right\| < \epsilon_2 \right) > 1\!-\!\delta
\end{align*}
holds uniformly for all model parameters $\theta$.
\end{numberProposition}

The loss function is as defined in the paper:
\begin{equation}
    \mathcal{L}_X = h_q(Z_\E) - \Expec_{x\sim p(X + \E)}\Big[ \log \big| \det J_\theta(x) \big| \Big]
    \label{eq:loss_lx_def}
\end{equation}
as well as its empirical estimate using $N$ samples, ${\mathcal{L}}_X^{(N)}$.

We split the proof into two Lemmas, which we will later combine.

\begin{lemma}
For any $\eta_1, \eta_2 > 0$ and $\delta > 0$ there is an $N_0 \in \N$ so that
\begin{align}
    \mathrm{Pr}\Big( \big| {\mathcal{L}}^{(N)}_X - \mathcal{L}_X \big| < \eta_1 \Big) > 1 - \delta & \label{eq:loss_estimator} \\
    \mathrm{Pr}\Big( \big| \frac{\partial }{\partial \theta} {\mathcal{L}}^{(N)}_X - \frac{\partial }{\partial \theta} \mathcal{L}_X \big| < \eta_2 \Big) > 1 - \delta & \label{eq:loss_grad_estimator}\\
    & \forall N \geq N_0 \nonumber
\end{align}
\end{lemma}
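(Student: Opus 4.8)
The plan is to recognize both $\mathcal{L}_X^{(N)}$ and its $\theta$-gradient as empirical means of i.i.d.\ random variables and to invoke a law of large numbers; the only genuine work is verifying the moment conditions that make this legitimate. Writing $f_\theta(x') = -\log q\big(g_\theta(x')\big) - \log\big|\det J_\theta(x')\big|$, we have $\mathcal{L}_X = \Expec_{p(X),p(\E)}\big[f_\theta(X+\E)\big]$ and $\mathcal{L}_X^{(N)} = \tfrac1N\sum_{i=1}^N f_\theta(x_i+\varepsilon_i)$, where the pairs $(x_i,\varepsilon_i)$ are drawn i.i.d.\ from $p(X)\times p(\E)$. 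Thus the summands are i.i.d.\ with population mean exactly $\mathcal{L}_X$, and Eq.~\ref{eq:loss_estimator} is precisely a concentration statement for such a mean.

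First I would establish the moment bound licensing the LLN. By Assumption 2, $J_\theta$ is uniformly bounded with $|\det J_\theta|\ge c>0$, so $\log|\det J_\theta(\cdot)|$ is uniformly bounded and contributes a finite-variance term. The term $-\log q\big(g_\theta(x')\big)$ needs more care, since $\E$ is Gaussian and hence $x'=x+\varepsilon$ ranges over all of $\R^d$. As $q$ is a Gaussian mixture, $-\log q(z)$ grows at most quadratically in $\|z\|$; and since $g_\theta$ is uniformly Lipschitz (Assumption 2) and $\mathcal{X}$ is compact (Assumption 1), $\|g_\theta(x+\varepsilon)\| \le \|g_\theta(x)\| + L\|\varepsilon\|$ grows at most linearly in $\|\varepsilon\|$. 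Hence $f_\theta(x+\varepsilon)$ has at most quadratic growth in $\|\varepsilon\|$, and because every moment of the Gaussian $\E$ is finite, $f_\theta(X+\E)$ has finite variance. Chebyshev's inequality then yields Eq.~\ref{eq:loss_estimator}: for given $\eta_1,\delta$ it suffices to take $N_0 \ge \mathrm{Var}\big(f_\theta(X+\E)\big)/(\eta_1^2\delta)$.

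For the gradient, I would first justify $\frac{\partial}{\partial\theta}\mathcal{L}_X = \Expec\big[\frac{\partial}{\partial\theta}f_\theta(X+\E)\big]$ by dominated convergence: Assumption 2 gives continuous differentiability of $g_\theta, J_\theta$ in $\theta$ on the compact set $\Theta$ with uniform bounds, which, combined with the quadratic-in-$\|\varepsilon\|$ envelope above, supplies an integrable dominating function for $\partial_\theta f_\theta$. Then $\frac{\partial}{\partial\theta}\mathcal{L}_X^{(N)} = \tfrac1N\sum_i \partial_\theta f_\theta(x_i+\varepsilon_i)$ is again an empirical mean of i.i.d.\ finite-variance terms, and a second application of Chebyshev gives Eq.~\ref{eq:loss_grad_estimator}; taking $N_0$ to be the larger of the two thresholds handles both statements at once.

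The main obstacle is the unbounded Gaussian noise: one cannot appeal to Hoeffding on a bounded range, and must instead verify the quadratic-growth envelope and its $p(\E)$-integrability for both $f_\theta$ and $\partial_\theta f_\theta$. A secondary point, needed when feeding this Lemma into Proposition 3, is upgrading the pointwise-in-$\theta$ convergence to a bound uniform over $\theta\in\Theta$. The same ingredients deliver this: compactness of $\Theta$, uniform Lipschitz-continuity of $\theta\mapsto f_\theta$ (from the uniform gradient bounds), and the integrable envelope satisfy the hypotheses of a uniform law of large numbers, so a covering/equicontinuity argument promotes the pointwise LLN to $\sup_\theta\big|\mathcal{L}_X^{(N)}(\theta)-\mathcal{L}_X(\theta)\big|\to 0$ in probability, and likewise for the gradients.
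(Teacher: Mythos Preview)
Your proof is correct and follows the same high-level strategy as the paper---recognize $\mathcal{L}_X^{(N)}$ and its gradient as empirical means and invoke a law of large numbers---but the technical execution differs. The paper asserts outright that $z=g_\theta(x)$ is bounded (leaning on the clause in Assumption~2 that ``$g_\theta$ is uniformly bounded'') and then applies the uniform law of large numbers of \citet{newey1994large} directly; for the gradient it cites a standard interchange result and repeats. You instead treat the Gaussian noise as genuinely unbounded, derive a quadratic-in-$\|\varepsilon\|$ envelope from Lipschitz continuity of $g_\theta$ and the quadratic growth of $-\log q$, obtain finite variance, and apply Chebyshev pointwise before promoting to uniformity via a covering/equicontinuity argument. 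Your route is slightly longer but more self-contained: it does not rely on the somewhat delicate reading of Assumption~2 that an invertible map on $\R^d$ is globally bounded, and it makes explicit why the Gaussian tails cause no trouble. Either way the final step---take $N_0$ as the maximum of the two thresholds---is identical.
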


\begin{proof}
For the first part (Eq.~\ref{eq:loss_estimator}), we simply have to show that the uniform law of large numbers applies,
specifically that all expressions in the expectations are bounded and change continuously with $\theta$.
For the Jacobian term in the loss, this is the case by definition.
For the $h_q(Z_\E)$-term, we can show the boundedness of $\log q$ occurring in the expectation 
by inserting the GMM explicitly. We find
\begin{equation}
-\log(q(z)) \leq \max_y[ (z - \mu_y)^2 / 2] + const.
\end{equation}
while we know that $z=\INN(x)$ is bounded.
Therefore, the uniform law of large numbers \citep[Lemma 2.4]{newey1994large}
guarantees existence of an $N_1$ to satisfy the condition for all $\theta \in \Theta$.

For the second part (Eq.~\ref{eq:loss_grad_estimator}), we will show that the gradient w.r.t. $\theta$ and the expectation can be exchanged,
as the gradient is also bounded by the same arguments as before. 
We find that the conditions for exchanging expectation and gradient are trivially satisfied, 
again due to the bounded gradients (see \citet{l1995note}, assumption A1, with $\Gamma$ set to the upper bound).
This results in an $N_2 \in \N$ for which Eq.~\ref{eq:loss_grad_estimator} is satisfied.
As a last step, we simply define $N_0 \coloneqq \max(N_1, N_2)$.
\end{proof}

\begin{lemma}
For any $\eta_1, \eta_2 > 0$ there is an $\sigma_0 > 0$, so that
\begin{align}
    \left\|  CI_{\theta} (X,Z_\E) + d\log\sqrt{2\pi e \sigma^2} - \mathcal{L}_X \right\| < \eta_2 \label{eq:loss_consistency} \\
    \left\| \frac{\partial}{\partial \theta} \Big( CI_{\theta} (X,Z_\E) - \mathcal{L}_X \Big) \right\| < \eta_2 \label{eq:loss_grad_consistency} \\
    \forall \sigma < \sigma_0 \nonumber
\end{align}
\end{lemma}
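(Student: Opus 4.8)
The plan is to follow the formal derivation of $\mathcal{L}_X$ sketched in Section~3.1, but to carry along the remainder terms explicitly and to prove that they vanish uniformly in $\theta$ as $\sigma\to 0$, both for the value (Eq.~\ref{eq:loss_consistency}) and for the $\theta$-gradient (Eq.~\ref{eq:loss_grad_consistency}). First I would decompose the cross-information exactly as in the main text, already using the bijectivity of $\INN$ to replace the conditioning event $\{X=x\}$ by the equivalent event $\{Z=\INN(x)\}$:
\begin{align*}
CI(X,Z_\E) = h_q(Z_\E) + \underbrace{\Expec_{p(X),p(\E)}\big[\log q\big(\INN(x+\varepsilon)\,\big|\,Z=\INN(x)\big)\big]}_{A}.
\end{align*}
The first term $h_q(Z_\E)$ coincides with the corresponding term of $\mathcal{L}_X$ (Eq.~\ref{eq:loss_lx_def}), so all of the $\sigma$-dependence I must control lives in $A$.

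Next I would expand $A$. Writing $\varepsilon=\sigma n$ with $n\sim\mathcal{N}(0,\mathbb{I})$ and Taylor-expanding $\INN(x+\varepsilon)=\INN(x)+J_x\varepsilon+R(x,\varepsilon)$ with $\|R\|=O(\sigma^2)$, the conditional law of $\INN(x+\varepsilon)$ given $\INN(x)$ is, to leading order, Gaussian with mean $\INN(x)$ and covariance $\sigma^2 J_x J_x^\top$. The decisive technical step is to move the remainder out of the logarithm and then out of the expectation. By Assumption~2, $|\det J_\theta|$ is bounded away from zero and $\INN,J_\theta$ are uniformly bounded, so $q$ evaluated along $\INN(X+\E)$ admits a uniform positive lower bound; hence $\log q$ has a uniformly bounded Lipschitz constant $1/\inf q$, which lets the $O(\sigma^2)$ remainder pass outside the $\log$, and the uniform bound on the integrand then lets it pass outside the expectation. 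This gives
\begin{align*}
A = \Expec_{p(X)}\!\Big[-\tfrac{1}{2}\log\det\!\big(2\pi e\,\sigma^2 J_x J_x^\top\big)\Big] + O(\sigma^2)
  = \Expec_{p(X)}\!\big[-\log|\det J_x|\big] - d\log\sqrt{2\pi e\,\sigma^2} + O(\sigma^2),
\end{align*}
with all implied constants uniform in $\theta$ by compactness of $\Theta$ together with the uniform bounds.

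Then I would swap the evaluation point of the Jacobian, replacing $\Expec_{p(X)}[\log|\det J_x|]$ by $\Expec_{p(X),p(\E)}[\log|\det J_\varepsilon|]$ at the cost of an extra $O(\sigma)$ term, using continuity and differentiability of $J_\theta$ in $X$ plus its uniform boundedness so that the substitution error is again uniform in $\theta$. Collecting terms yields
\begin{align*}
CI(X,Z_\E) + d\log\sqrt{2\pi e\,\sigma^2} = h_q(Z_\E) - \Expec_{p(X),p(\E)}\big[\log|\det J_\varepsilon|\big] + O(\sigma) = \mathcal{L}_X + O(\sigma),
\end{align*}
which is Eq.~\ref{eq:loss_consistency}; choosing $\sigma_0$ small enough to force the uniform $O(\sigma)$ bound below $\eta_1$ settles the first claim.

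For the gradient statement Eq.~\ref{eq:loss_grad_consistency} I would rerun the same expansion but differentiate in $\theta$ before passing to $\sigma\to 0$. The extra hypotheses of Assumption~2 — that $\INN$ and $J_\theta$ are differentiable in $\theta$ with uniformly bounded $\theta$-derivatives over the compact $\Theta$ — let me differentiate the Taylor remainder and the Jacobian-substitution error and argue their $\theta$-gradients are still $O(\sigma)$. I expect the main obstacle to lie exactly here: one must check that differentiating in $\theta$ does not destroy the uniform decay, i.e. that $\partial_\theta$ of the $O(\sigma^2)$ remainder remains $O(\sigma)$ and that gradient and expectation can be interchanged. Both follow from the uniform bounds on $\INN,J_\theta$ and their $\theta$-derivatives, combined with the same dominated-convergence / gradient-expectation exchange conditions already invoked in Lemma~1. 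Once these are in place the gradient difference is $O(\sigma)$ uniformly in $\theta$, and a suitable $\sigma_0$ brings it below $\eta_2$, completing the proof.
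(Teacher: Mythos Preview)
Your proposal is correct and follows essentially the same approach as the paper: decompose $CI(X,Z_\E)=h_q(Z_\E)-h_q(Z_\E\mid Z)$, Taylor-expand $Z_\E$ to get a Gaussian conditional with covariance $\sigma^2 J_x J_x^\top$, compute its entropy analytically, and swap $J_x$ for $J_\varepsilon$ at $O(\sigma)$ cost, all uniformly in $\theta$. The only cosmetic difference is that the paper packages the remainder step as a zero-order von~Mises expansion of the cross-entropy functional and invokes Arzel\`a--Ascoli for the gradient uniformity, whereas you push the $O(\sigma^2)$ through $\log$ via a Lipschitz bound and handle the gradients by directly differentiating the remainder; the underlying logic is identical.
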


\begin{proof}
In the following proof, we make use of the $O(\cdot)$ notation, see e.g.~\citet{de1981asymptotic}:
\begin{quote}
We write $f(\sigma) = O(g(\sigma)) \;\; (\sigma \to 0)$ iff there exists a $\sigma_0$ and an $M \in \R$, $M>0$ so that
\begin{equation}
\| f(\sigma) \| < M \, g(\sigma) \quad \forall \sigma \leq \sigma_0.
\end{equation}
\end{quote}

Furthermore, to discuss the limit case, it is necessary we reparametrize the noise variable
$\E$ in terms of noise $S$ with a fixed standard normal distribution:
\begin{equation}
\E = \sigma S \quad \text{with} \quad p(S) = \mathcal{N}(0,1)
\end{equation}

To begin with, we use the invariance of $CI$ under the homeomorphic transform $\INN$. 
This can be easily verified by inserting the change-of-variables formula into the definition.
See e.g.~\citet{cover2012elements} Sec.~8.6. 
This results in
\begin{equation}
    CI(X, Z_\E) =  CI(Z, Z_\E) = h_q(Z_\E) - h_q(Z_\E | Z) 
    \label{eq:mi_z_space}
\end{equation}

Next, we series expand $Z_\E$ around $\sigma = 0$.
We can use Taylor's theorem to write
\begin{equation}
    Z_\E =  Z + J_\theta(Z) \E + O(\sigma^2) 
\end{equation}
We have written the Jacobian dependent on $Z$, but note that it is still $\partial \INN / \partial X$, and we simply substituted the argument.
We put this into the second entropy term $h_q(Z_\E | Z) $ in Eq. \ref{eq:mi_z_space}, 
and then perform a zero-order von Mises expansion of $h_q$.
In general, the identity is 
\begin{equation}
    h_q(W + \xi) = h_q(W) + O(\| \xi \|) \quad (\| \xi \| \to 0),
\end{equation}
and we simply put $\xi = O(\sigma^2)$
(the identity applies in the same way to the \emph{conditional} cross-entropy).
Intuitively, this is what we would expect: the entropy of an RV with a small perturbation should be approximately the same without the perturbation.
See e.g.~\citet{serfling2009approximation}, Sec. 6 for details.
Effectively, this allows us to write the residual outside the entropy:
\begin{align}
h_q(Z_\E | Z) &= h_q\big(Z + J_\theta(Z) \E + O(\sigma^2) \big| Z \big) \\
&= h_q\big(Z + J_\theta(Z) \E \big| Z \big)  + O(\sigma^2) \\
&= h_q\big(J_\theta(Z) \E \big| Z \big)  + O(\sigma^2)
\label{eq:entrop_noise}
\end{align}
At this point, note that $q_\theta(J_\theta(Z) \E | Z)$ is simply a multivariate normal distribution, due to the conditioning on $Z$.
In this case, we can use the entropy of a multivariate normal distribution, and simplify to obtain the following:
\begin{align}
- h_q(J_\theta\E | Z) &= \Expec\left[ \frac{1}{2} \log\left( \det(2\pi \sigma^2 J_\theta J_\theta^T) \right) \right] \\
&= \Expec\left[ \frac{1}{2} \log\left( (2\pi \sigma^2)^d \det(J_\theta)^2 \right) \right] \\
&= d\log\sqrt{2\pi e \sigma^2} + \Expec\left[ \log |\det J_\theta|  \right].
\end{align}
Here, we exploited the fact that $J_\theta(Z)$ is an invertible matrix, and used $d = \dim(Z)$.
Finally, as in practice we only want to evaluate the model once, we use the differentiability of $J_\theta$ to replace
\begin{equation}
    \Expec \left[ \log |\det J_\theta(Z)| \right] = \Expec \left[ \log | \det J_\theta(Z_\E) | \right] + O(\sigma).
\end{equation}
The residual can be written outside of the expectation as we know it is bounded from our assumptions about $\INN$ and $J_\theta$
(Dominated Convergence theorem).

Putting the terms together, we obtain
\begin{align}
CI (X,Z_\E) &= h_q(Z_\E) - d\log\sqrt{2\pi e \sigma^2} \nonumber \\
& \quad  - \Expec\left[ \log |\det J_\theta|  \right] + O(\sigma) \\
& = \mathcal{L}_X - d\log\sqrt{2\pi e \sigma^2} + O(\sigma)
\end{align}

Through the definition of $O(\cdot)$, Eq.~\ref{eq:loss_consistency} is satisfied.
To show that the gradients also agree (Eq.~\ref{eq:loss_grad_consistency}), 
we must ensure that the $O(\sigma)$ term is uniformly convergent to 0 over $\theta$, 
i.e. there is a single constant $M$ in the definition of $O(\cdot)$ that applies for all $\theta \in \Theta$.
This is directly the case, as $\INN$ is Lipschitz continuous and the outputs are bounded (Arzela - Ascoli theorem).
\end{proof}

We can now combine the two Lemmas 1 and 2, to show Proposition 3.

\textbf{Proposition 3 - Proof.}
\begin{proof}
The Proposition follows directly from Lemmas 1 and 2: 
for a given $\epsilon_1$, $\epsilon_2$ and $\delta$, we choose each $\eta_i = \epsilon_i / 2$,
and apply the triangle inequality, meaning there exists an $N_0$ and $\sigma_0$ so that
\begin{align*}
    \left|\,
      CI(X,Z_\E) + d\log\sqrt{2\pi e \sigma^2} - \mathcal{L}_X^{(N)}
     \right|  \\
     \leq \left|\, CI(X,Z_\E) + d\log\sqrt{2\pi e \sigma^2} - \mathcal{L}_X \right| 
      + \left|\mathcal{L}_X - \mathcal{L}_X^{(N)} \right|  \\
     < \frac{\epsilon_1}{2} + \frac{\epsilon_1}{2} 
\end{align*}
And therefore $\mathrm{Pr}(\dots) > 1-\delta$. Equivalently for the gradients.

\end{proof}

\subsection{Density Error through Noise Augmentation}
\newcommand{\PPs}[1]{P_{#1}}
\newcommand{\PP}[1]{Q_{#1}}
\newcommand{\PPt}[1]{{\tilde Q}_{#1}}
\newcommand{\dPj}{\Delta P_j}

\newcommand{\dX}{\Delta X}
\newcommand{\WW}[1]{w_{#1}}
\newcommand{\Normal}[1]{\mathcal{N}({#1}; 0, \sigma^2 \mathbb{I})}
\newcommand{\XE}{X_\E}

For the derivation of the losses, we only assumed that $X$ and $X + \E \eqqcolon \XE$ are both RVs on a domain $\mathcal{X}$,
and required no further assumptions about a possible quantization of $X$.
However, \emph{if} $X$ is quantized, which is mostly the case in practice, 
we can exploit this fact to derive a bound on the additional modeling error caused by the augmentation.
To demonstrate this, we introduce the discrete, quantized data $W$.
This is essentially the same as $X$, but is only defined on a finite, discrete set $\mathcal{W}$. 
With $F$ regular quantization steps in each of the $d$ dimensions, spaced by the quanitzation step size $\dX$, we write
\begin{align}
    \mathcal{W} &= \big\{ 0,  1\dX, 2 \dX \dots, (F-1) \dX \big\}^d \subset \mathcal{X},
\end{align}
We denote probabilities of this discrete variable as upper case $P$ and $Q$ for true and modeled probabilities, respectively.
We index the finite number of elements in $\mathcal{W}$ as $\WW{i}$.
For convenience, we also introduce the following notation:
\begin{equation}
    P(\WW{i}) \eqqcolon \PPs{i} \quad \quad Q(\WW{i}) \eqqcolon \PP{i}.
\end{equation}
Furthermore, we denote the noise distribution used for augmentation as $r(\E)$ in the following, as this simplifies the notation and avoids ambiguities
(it was denoted $p(\E)$ instead for the loss derivation).
From this, we can see how the distribution $p(\XE)$, which is used to train the network, can be expressed in terms of $P(W)$ and $r(\E)$:
\begin{equation}
    p(\XE) = \sum_i \PPs{i} \,\, r(\XE - \WW{i})
    \label{eq:augmented_distrib}
\end{equation}

At test time, we want to recover an estimate $\PP{i}$. For standard normalizing flows, this is generally computed as
\begin{equation}
    \PPt{i} \coloneqq \frac{q(\XE = \WW{i})}{r(0)}
    \label{eq:de_augmented_distrib}
\end{equation}
Among other things, this is used to measure the bits/dim.
In the most general case, $\PPt{}$ will not sum to $1$, so it is not guaranteed to be a valid probability, indicated by the tilde.
Nevertheless,
we can see why this definition is sensible by considering the noise distribution $r$ used by most normalizing flows:
hereby the support of $r$ in each dimension is smaller or equal to the quantization step size.
Then, only one term in the sum in Eq.~\ref{eq:augmented_distrib} is $\neq 0$ at any point.
As a result, we obtain
\begin{equation}
    q(\XE) = p(\XE) \implies \tilde Q(W) = P(W).
    \label{eq:augmentation_implication}
\end{equation}
This means that in principle a standard normalizing flow can learn the true underlying discrete distribution from the noisy augmented distribution.
In other words, the augmentation process does not introduce an additional error to the density estimation.

We now apply these definitions to our setting of a Gaussian noise distribution, $r(\E) = \mathcal{N}(0, \sigma^2 \mathbb{I})$.
We consider the case where the model learns the training data distribution perfectly,
i.e.~$q(\XE) = p(\XE)$.
We find that Eq.~\ref{eq:augmentation_implication} no longer holds for the Gaussian case, 
but that the error between $\tilde Q(W)$ and $P(W)$ has a known bound that decreases exponentially for small $\sigma$.
For convenience, we write $A \coloneqq \Normal{0} = (2 \pi \sigma^2)^{-d/2}$.
From this, we get
\begin{align}
    \PPt{j} &= \frac{q(\XE = w_j)}{A} = \frac{p(\XE = w_j)}{A} \\
    &= \frac{1}{A} \sum_i \PPs{i} \Normal{\WW{j} - \WW{i}}\\
    &= \frac{\PPs{j} \Normal{0}}{A} + \frac{1}{A} \sum_{i \neq j} \PPs{i} \Normal{\WW{j} - \WW{i}} \\
    &= \PPs{j} + \underbrace{\frac{1}{A} \sum_{i \neq j} \PPs{i} \Normal{\WW{j} - \WW{i}}}_{\coloneqq \dPj}
\end{align}
We are now interested in determining a bound for the error $\dPj$.
Because $\| \WW{i} - \WW{j} \| \geq \dX$ for $i \neq j$, we know
\begin{equation}
\Normal{\WW{i} - \WW{j}} \leq A \exp\left( - \frac{\dX^2}{2 \sigma^2} \right).
\end{equation}
From that, we obtain the following bound:
\begin{align}
    \dPj &\leq \left( \sum_{i\neq j} \PPs{i} \right)  \frac{1}{A} A \exp\left( - \frac{\dX^2}{2 \sigma^2} \right) \\
         &\leq \exp\left( - \frac{\dX^2}{2 \sigma^2} \right)
\end{align}

\section{Practical Loss Implementation}
In the following, we provide the explicit loss implementations,
as there are some considerations to make with regards to numerical tractability.
Specifically, we make use of the operations
{\tt softmax, log\_softmax, logsumexp}
provided by major deep learning frameworks, as they avoid the most common pitfalls.

The class probabilities $q(Y)$ can be characterized through a vector $\Phi$, with
\begin{equation}
    q(y) = \mathrm{softmax}_y(\Phi),
\end{equation}
where the subscript of the softmax operator denotes which index is selected for the enumerator.
The use of the softmax ensures that $w_y$ stay positive and sum to one.
For our work, $q(Y) = p(Y)$ is known beforehand, so we leave $\Phi$ fixed to 0 (equal probability for each class).
However, we also find it is possible to learn $\Phi$ as a free parameter during training. 
In this case, only the gradients of the $\Lx$ loss w.r.t.~$\Phi$ should be taken,
as the $\Ly$ loss is no longer a lower bound,
and can be exploited by sending $\Phi_y \to \infty$ for some fixed $y$, and $\Phi_k \to -\infty$ for all $k \neq y$.
If only $\Lx$ is backpropagated w.r.t.~$\Phi$, this is avoided and $\Phi$ converges to the correct class weights.
We use the shorthand $w_y \coloneqq \log p(y) $ in the following.

With $z \coloneqq \INN(x + \varepsilon)$, we also have
\begin{align}
\bullet & \log q(y) = w_y = \mathrm{logsoftmax}_y(\Phi) \\
\bullet & \log q(z|y) = - \frac{1}{2} \| z - \mu_y \|^2 + const. \\
\bullet & \log q(z) = \logsumexp_{y'} \left(- \frac{\| z - \mu_{y'} \|^2}{2} + w_{y'}  \right) + const.
\end{align}
With this, the loss functions are evaluated as
\begin{align}
    \Lx(x) &= \logsumexp_{y'} \!\left(\frac{\| z - \mu_y' \|^2}{2} - w_{y'}  \right) - \log J(x) \\
    \Ly(x,y) &= \mathrm{logsoftmax}_y\!\Big(-\frac{\| z - \mu_{y'} \|^2}{2} + w_{y'} \Big) - w_y.
\end{align}
The constants have been dropped for convenience.
The use of the {\tt logsumexp} and {\tt logsoftmax} operations above is especially important. 
Otherwise when explicitly performing the exp and log operations with 32 bit floating point numbers, the values become too large,
and the loss numerically ill-defined ({\tt NaN}).

\section{Calibration Error Measures}
\newcommand{\Ntot}{n_\mathrm{tot}}
\newcommand{\Ni}{n^{(i)}}
\newcommand{\Nci}{n_c^{(i)}}
\newcommand{\Bi}{B_i}

In the following, we make use of the Iverson bracket:
\begin{equation}
\big[C \big] \coloneqq {\begin{cases}1&{\text{if }}C{\text{ is true;}}\\0&{\text{otherwise,}}\end{cases}}
\end{equation}

Firstly, we define the bin edges
$b_i$, with $i\in \{ 1, \dots, K+1\}$, 
so that $b_1 = 0$, $b_{K+1} = 1$, and $b_{i+1} > b_{i}$.
In practice, we choose the $b_i$ be spaced more tightly near high and low confidences, 
as this is where the bulk of the predictions are made:
\begin{verbatim}
concatenate(range(0.00, 0.05, stepsize=0.01),
            range(0.05, 0.95, stepsize=0.1),
            range(0.95, 1.00, stepsize=0.01))
\end{verbatim}

The bins themselves are then half-open intervals between the bin edges:
$B_i = [b_i, b_{i+1} )$ with $i\in \{ 1, \dots, K\}$.
We now define $\Ni$, the count of predictions within a confidence bin;
as well as $\Nci$, the count of \emph{correct} predictions in that bin:
\begin{align}
\Ni & \coloneqq \sum_{x_j} \sum_{y'} \big[ p(y'|x_j) \in \Bi \big] \\
\Nci & \coloneqq \sum_{(x_j, y_j)} \sum_{y'} \big[ p(y'|x_j) \in \Bi \big] \cdot \big[  \argmax_{y'}(p(y' | x_j) = y_j \big] 
\end{align}
where $x_j$ and the $(x_j, y_j)$-pairs are from the test set.

We define the confidence $P$ as the center of each bin, and the achieved accuracy in this bin as $Q$:
\begin{align}
P_i &=  \frac{b_i + b_{i+1}}{2} \\
Q_i &=  \frac{\Nci}{\Ni} 
\end{align}

Finally, using $Q$ and $P$, we define the calibration error measures, in agreement with \cite{guo2017calibration}:
\begin{align}
    \mathrm{ECE} & = \sum_i \frac{\Ni}{\Ntot} | P_i - Q_i | & \text{(Expected calib.~err.)} \\
    \mathrm{MCE} & = \max_i | P_i - Q_i | & \text{(Maximum calib.~err.)} \\
    \mathrm{ICE} & = \sum_i (b_{i+1} - b_i) | P_i - Q_i | & \text{(Integrated calib.~err.)} 
\end{align}
using the shorthand $\Ntot \coloneqq \sum_i \Ni$.

\section{Additional Experiments}
\subsection{Choice of $\sigma$}

Fig.~\ref{fig:append_sigma} shows the behaviour for 25 different models trained with $\sigma$ between $10^{-4}$ and $10^{0}$ (x-axis),
and fixed $\gamma=0.2$.
We find that the loss values (left) and performance characteristics (middle) do not depend on $\sigma$ below a threshold that is 
about a factor 4 smaller than the qantization step size $\Delta X$.
Contrary to expectations from existing work on normalizing flows, 
the models performance does not decrease even when $\sigma$ is 50 times smaller than $\Delta X$.
Detrimental effects might occur more easily if the quantization steps are larger, e.g. $\Delta X = 1/32$ as used by \cite{kingma2018glow},
or if the model were more powerful or less regularized (e.g. from the tanh-clamping we employ).
The rightmost plot compares our approximation of $CI(X, Z_\varepsilon)$ with the asymptotic $I(X, Z_\varepsilon) + const.$ for $\sigma \to 0$, where the constant is unknown. 
The slope of the approximation agrees well for small $\sigma$, but breaks down for larger values.

\begin{figure}
\includegraphics[width=\textwidth]{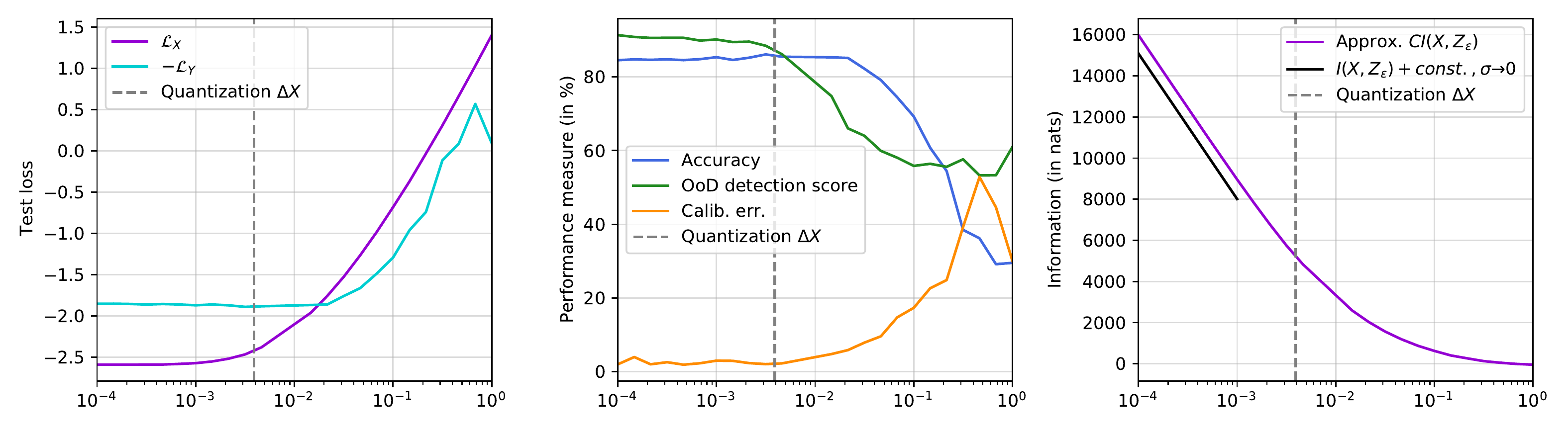}
\caption{From left to right: Changes in test-loss, performance metrics, 
and a comparison between approximation and known slope of the true mutual information for varying values of $\sigma$ (x-axis)}
\label{fig:append_sigma}
\end{figure}

\subsection{CIFAR100}

Table \ref{tab:results_cifar100} reports the performance of the models on CIFAR100.
The general behaviour observed for CIFAR10 is repeated here:
The IB-INN model which balances both loss terms peforms significantly better in terms of uncertainty calibration than both standard GCs and DCs.
It also performs OoD detection almost as well as pure GCs, with a much better classification error.

\begin{table*}
\caption{
Results on the CIFAR100 dataset.
All models have the same number of parameters and were trained with the same hyperparameters.
All values except entropy and overconfidence are given in percent.
The arrows indicate whether a higher or lower value is better. 
}
\label{tab:results_cifar100}
\vspace{-2mm}
\begin{center}
\resizebox{\textwidth}{!}{
\begin{tabular}{l|l"r"r"r|rrr"r|rrrr"r|rrrr}
\multicolumn{2}{c"}{Model} & Classif. & Bits/dim &
\multicolumn{4}{c"}{Calibration error ($\downarrow$)} & 
\multicolumn{5}{c"}{Incr. OoD prediction entropy ($\uparrow$)} & 
\multicolumn{5}{c}{OoD detection score ($\uparrow$)} \\
\multicolumn{2}{l"}{ } & err. ($\downarrow$)& ($\downarrow$)& 
{\tiny Geo.~mean} & {\tiny ECE} & {\tiny MCE} & {\tiny ICE} & 
{\tiny Average} & {\tiny RGB-rot} & {\tiny Draw} & {\tiny Noise} &{\tiny ImgNet}& 
{\tiny Average} & {\tiny RGB-rot} & {\tiny Draw} & {\tiny Noise} &{\tiny ImgNet}\\
\thickhline 

\multirow{3}{*}{\shortstack[c]{IB-INN \\ (ours)}}
 & only $\mathcal{L}_X$ ($\bbeta = 0$) &
                            -- &      \bf 4.82 &
            -- &            -- &            -- &            -- &
            -- &            -- &            -- &            -- &            -- &
         70.03 &         63.35 &         87.45 &         85.12 &         50.99 \\
 & $\bbeta = 0.1$ & 
                         42.57 &          4.94 &
      \bf 2.60 &      \bf 0.58 &      \bf 7.04 &      \bf 4.28 &
          0.50 &          0.66 &          0.28 &      \bf 0.35 &          0.69 &
         68.31 &     \bf 66.53 &         78.91 &         81.70 &         50.75 \\
 & only $\mathcal{L}_Y$ (\small $\bbeta \rightarrow \infty$) &
                         33.78 &         18.44 &
          4.49 &          0.62 &         16.76 &          8.72 &
          0.58 &          0.52 &          1.04 &          0.00 &      \bf 0.77 &
         58.29 &         47.95 &     \bf 99.37 &         49.23 &         49.23 \\
\hline
\multirow{2}{*}{Stand.~GC}
 & Class-NLL &
                         97.92 &      \bf 4.82 &
         16.20 &          1.02 &         95.63 &         43.53 &
         -0.04 &         -0.14 &          0.55 &         -0.53 &         -0.03 &
     \bf 70.26 &         64.68 &         86.54 &     \bf 85.19 &     \bf 51.09 \\
 & Class-NLL + regul.&
                         69.28 &          5.07 &
         13.94 &          0.75 &         89.74 &         40.15 &
          0.00 &         -0.00 &         -0.01 &          0.01 &          0.01 &
         68.83 &         64.96 &         82.19 &         83.32 &         50.44 \\
\hline
\multirow{2}{*}{Stand. DC}
 & ResNet &
                    \bf  29.27 &            -- &
          5.13 &          0.65 &         20.57 &         10.16 &
      \bf 0.60 &      \bf 0.68 &          0.97 &         -0.00 &          0.74 &
            -- &            -- &            -- &            -- &            -- \\
 & i-RevNet & 
                          37.54 &            -- &
          5.18 &          0.63 &         19.85 &         11.09 &
          0.51 &          0.32 &      \bf 1.00 &         -0.00 &          0.75 &
            -- &            -- &            -- &            -- &            -- \\
\end{tabular}
}
\end{center}
\vspace{-1mm}
\end{table*}

There are two differences compared to the CIFAR10 case: 
Firstly, in terms of increase in predictive entropy on OoD data, there are much smaller differences between models (excluding the standard GCs).
The standard ResNet has the best overall performance by a small margin.
Note that the increase in prediction entropy is also influenced 
by the calibration and overall classification error of the model to some degree, 
so we are careful in drawing any conclusions from minor differences.
Secondly, we find that the most advantageous trade-off regime is now at a lower value of $\bbeta$.
The only values trained for CIFAR100 were $\bbeta \in \{ 0.1, 1, 10\}$, 
and we find that the models with $\bbeta$ set to 1 and 10 behave almost the same as the limit case $\bbeta \to \infty$.
The explanation for this is simple: due to the increased difficulty of the task, the $\mathcal{L}_Y$ loss is higher than for CIFAR10.
Therefore, it has a larger influence at the same setting for $\bbeta$ compared to the CIFAR10 models.

\subsection{Further experiments}
Figure \ref{fig:beta_effect_appendix} provides all the performance metrics discussed in the paper over the range of $\bbeta$.

In Figure \ref{fig:latent_trajectory} we show the trajectory of a sample in latent space, 
when gradually increasing the angle $\alpha$ of the RGB-rotation augmentation used in the paper as an OoD dataset. 
It travels from in-distribution to out-of-distribution. Such images were never seen during training.

Figure \ref{fig:append_extra_samples} shows samples generated by the model, using different values of $\gamma$.
In general, we find the quality of generated images degrades faster with $\gamma$ than the interpolations between existing images.
We see indications that the mass of points in latent space is offset from the learned $\mu_y$, 
meaning the regions that are sampled from have not seen much training data.
In contrast to the IB-INN, the standard class-NLL trained model generates fairly generic looking images for all classes,
due to the collapse of class-components in latent space.

\begin{figure*}
\centering
\vspace{-3mm}
\includegraphics[width=1.\textwidth]{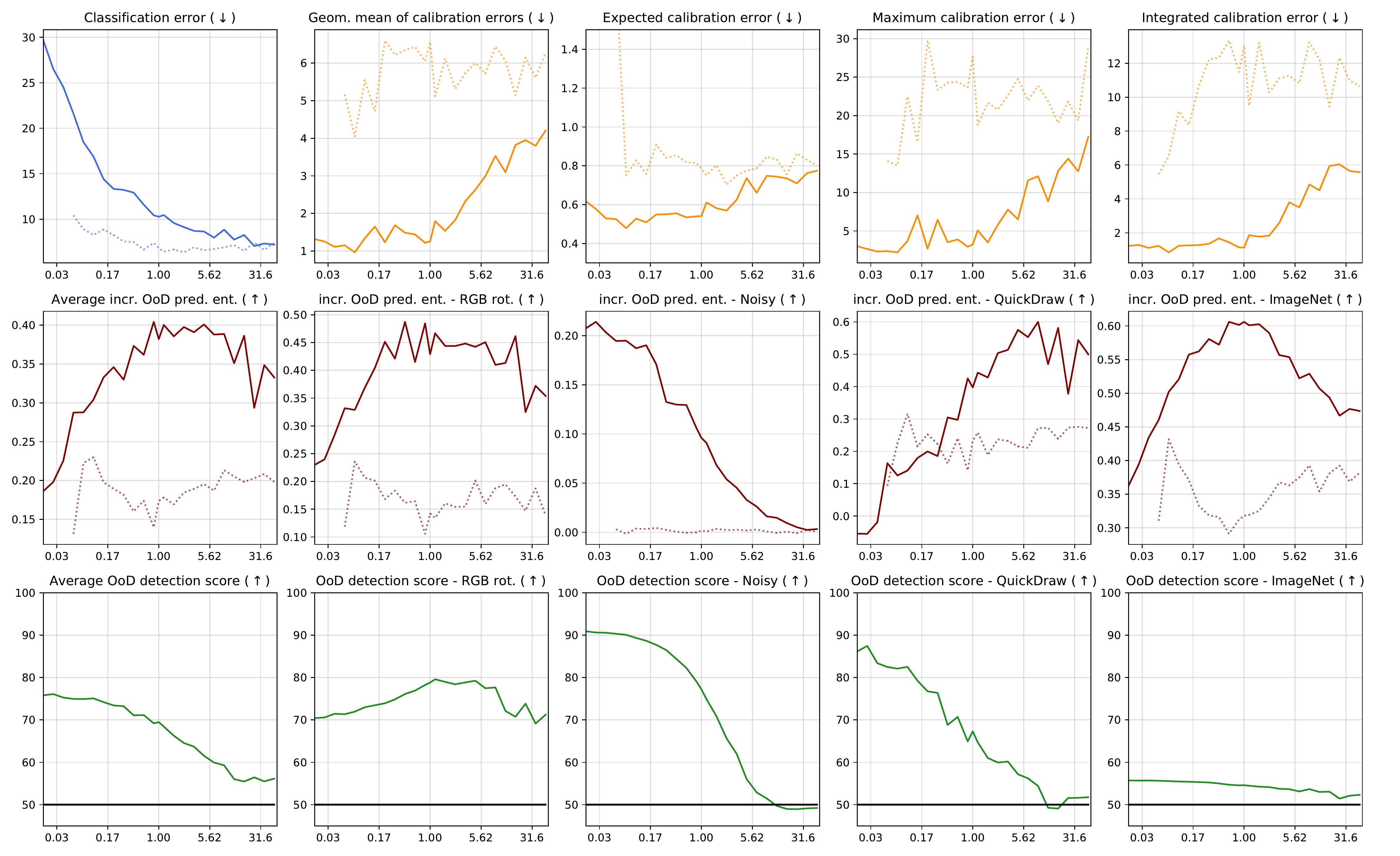}
\vspace{-6mm}
\caption{Effect of changing the parameter $\tilde \beta$ ($x$-axis) on the different performance measures ($y$-axis).
The arrows indicate if a larger or smaller score is better.
The black horizontal line in the last row indicates random performance.
Details are explained in the paper.
The VIB results are added as dotted lines. The VIB does not converge reliably for values of $\bbeta < 0.2$, producing some otiliers e.g.~for expected calibration error.
This is not to claim that the IB-INN is better than the VIB or vice versa. 
The comparison serves to show how the IB affects GCs and DCs differently.
}
\label{fig:beta_effect_appendix}
\end{figure*}

\begin{figure*}
\centering
\parbox{0.6\linewidth}{
\centering
\includegraphics[width=0.6\linewidth]{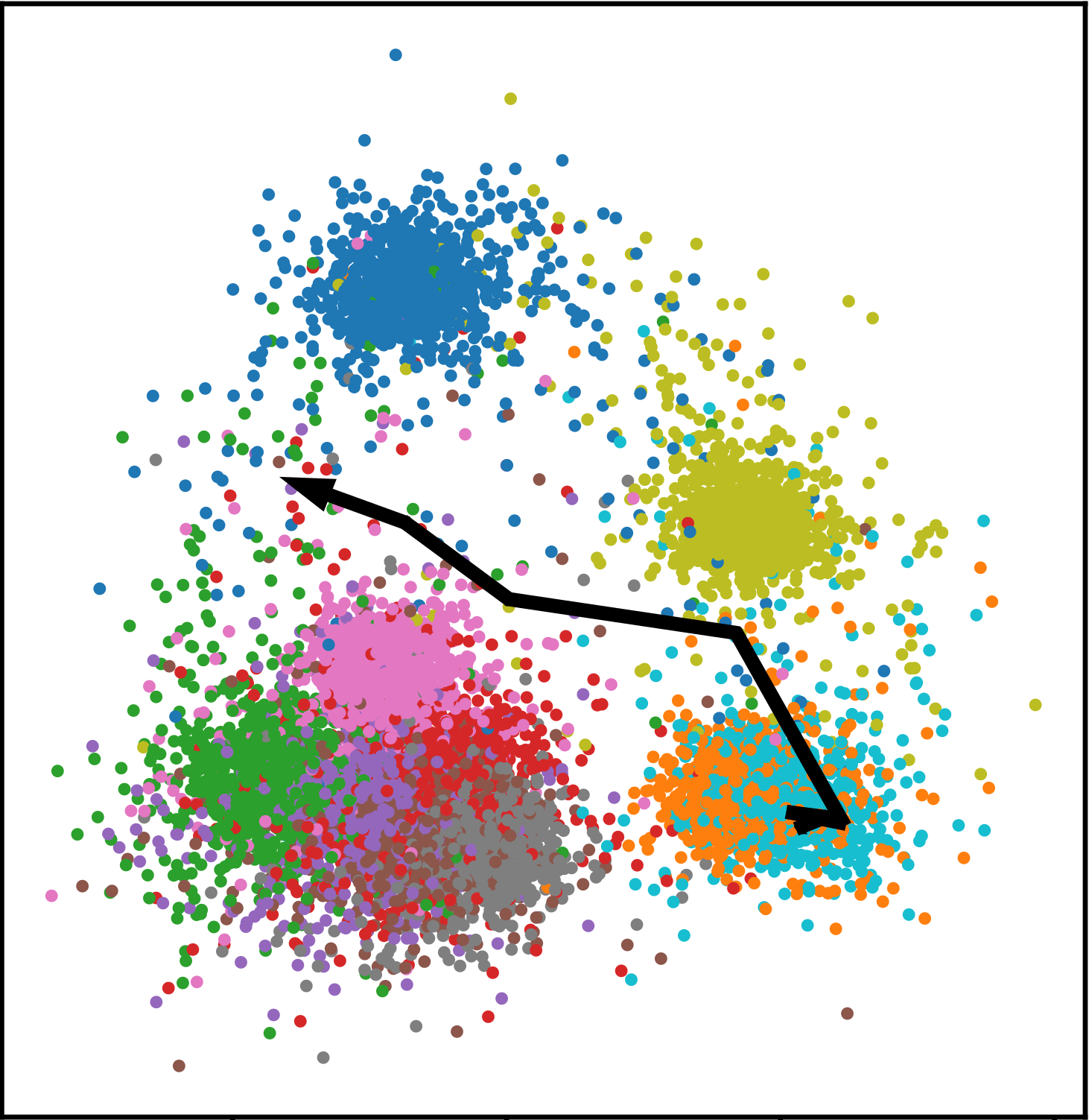}
\includegraphics[width=1.\linewidth]{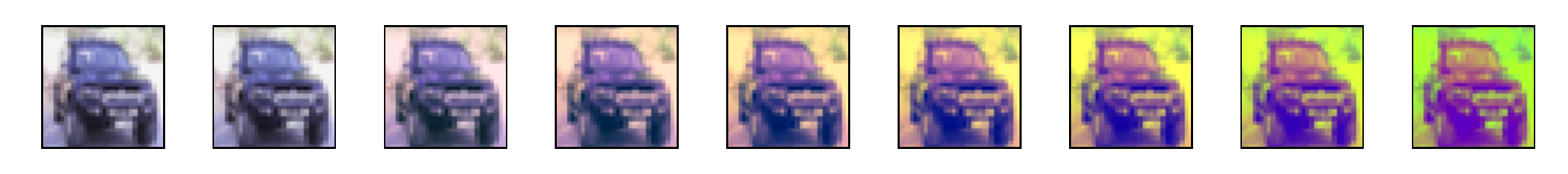}
\caption{The scatter plot shows the location of test set data in latent space.
A single sample is augmented by rotating the RGB color vector as described in the paper.
The small images show the successive steps of augmentation, while the black arrow shows the 
position of each of these steps in latent space.
We observe how the points in latent space travel further from the cluster center with increasing augmentation, 
causing them to be detected as OoD.
}
\label{fig:latent_trajectory}
}
\end{figure*}

\section{Network Architecture}
\label{sec:append_network}
\begin{figure*}[t!]
    \centering
    \begin{tabular}{c|c}
    \parbox{0.49\linewidth}{
        \centering 
        Forward computation (left to right): $$v_1 = u_1, \; v_2 = T(u_2; nn(v_1))$$
        \includegraphics[width=\linewidth]{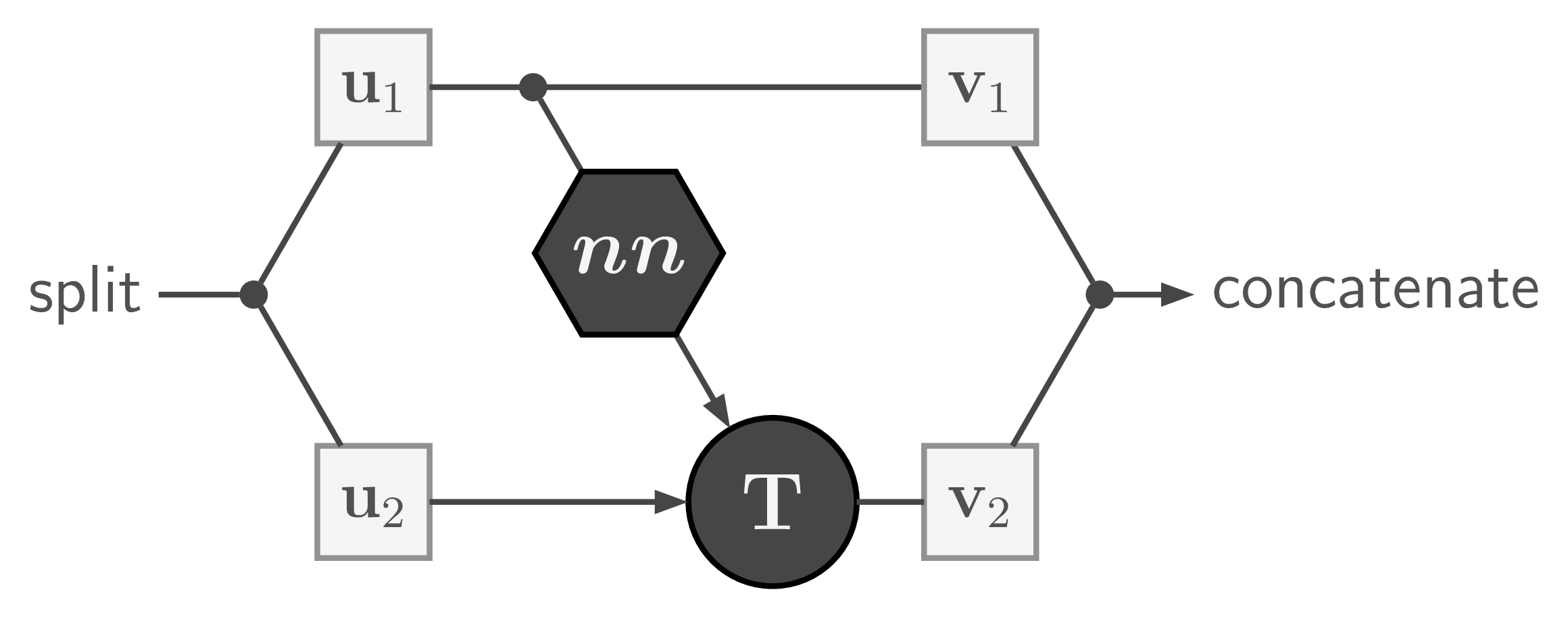}
        } &
    \parbox{0.49\linewidth}{
        \centering 
        Inverse computation (right to left): $$u_1 = v_1, \; u_2 = T^{-1}(v_2; nn(u_1))$$
        \includegraphics[width=\linewidth]{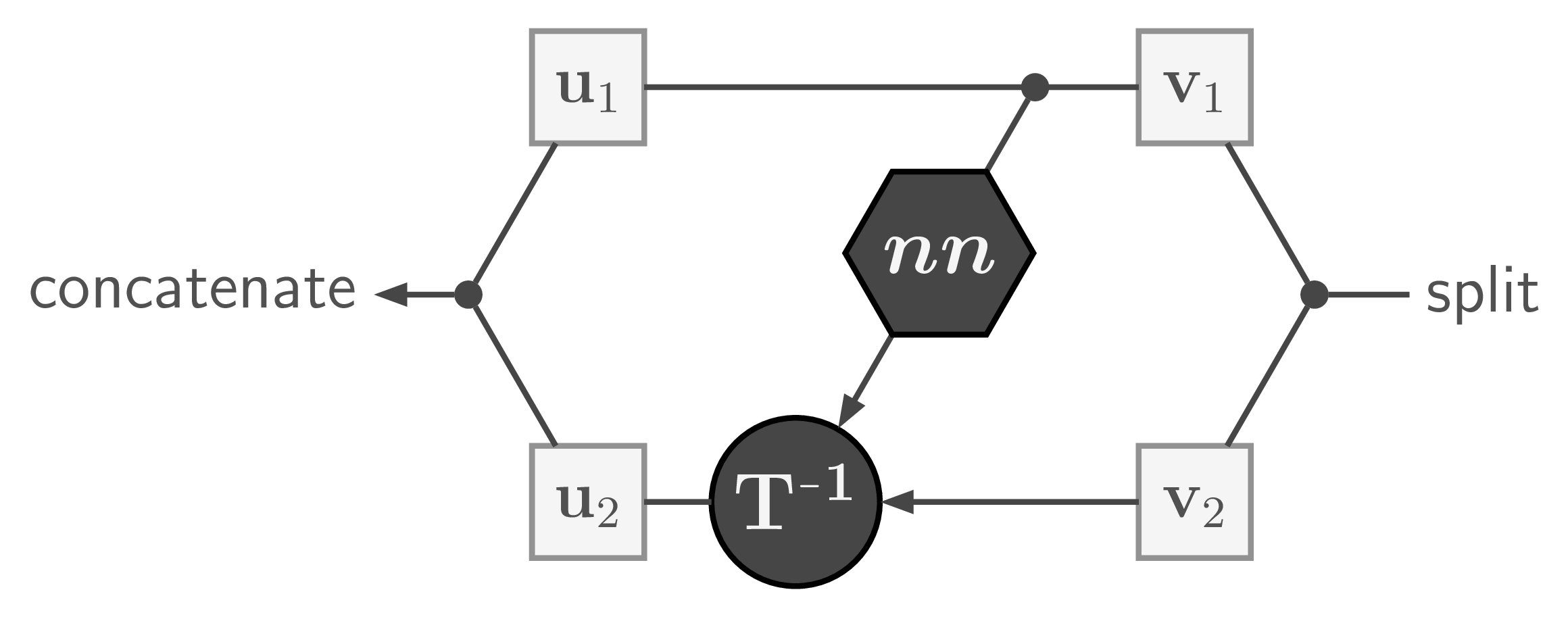}
        }
    \end{tabular}
    \caption{Illustration of a coupling block. $T$ represents some invertible transformation,
    in our case an affine transformation.
    The transformation coefficients are predicted by a subnetwork ($nn$), which contains fully-connected or convolutional layers,
    nonlinear activations, batch normalization layers, etc., similar to the residual subnetwork in a ResNet \citep{he2016deep}.
    Note that how the subnetwork does not have to be inverted itself.}
    \label{fig:coupling}
\end{figure*}

As in previous works, our INN architecture consists of so-called \emph{coupling blocks}.
In our case, each block consists of one affine coupling \citep{dinh2016density}, illustrated in Fig.~\ref{fig:coupling},
followed by random and fixed soft permutation of channels \citep{ardizzone2019guided}, 
and a fixed scaling by a constant, similar to ActNorm layers introduced by \cite{kingma2018glow}.
For the coupling coefficients, each subnetwork predicts multiplicative and additive components jointly,
as done by \cite{kingma2018glow}.
Furthermore, we adopt the soft clamping of multiplication coefficients used by \cite{dinh2016density}.

For downsampling blocks, we introduce a new scheme, 
whereby we apply the i-RevNet downsampling \citep{jacobsen2018irevnet} only to the inputs to the affine transformation ($u_2$ branch in Fig.~\ref{fig:coupling}),
while the affine coefficients are predicted from a higher resolution $u_1$ by using a strided convolution in the corresponding subnetwork.
After this, i-RevNet downsampling is applied to the other half of the channels $u_1$ to produce $v_1$, before concatenation and the soft permutation.
We adopt this scheme as it more closely resembles the standard ResNet downsampling blocks,
and makes the downsampling operation at least partly learnable.

We then stack sets of these blocks, with downsampling blocks in between, in the manner of [8, down, 25, down, 25].
Note, we use fewer blocks for the first resolution level, 
as the data only has three channels, limiting the expressive power of the blocks at this level.
Finally, we apply a discrete cosine transform
to replace the global average pooling in ResNets, as introduced by \cite{jacobsen2018excessive}, followed by two blocks with fully connected subnetworks.

We perform training with SGD, learning rate 0.07, momentum 0.9, and batch size 128, as in the original ResNet publication \citep{he2016deep}.
We train for 450 epochs, decaying the learning rate by a factor of 10 after 150, 250, and 350 epochs.

\begin{figure*}

\centering
\begin{turn}{90} \hspace*{14mm} class-NLL \end{turn}
\includegraphics[width=0.68\linewidth]{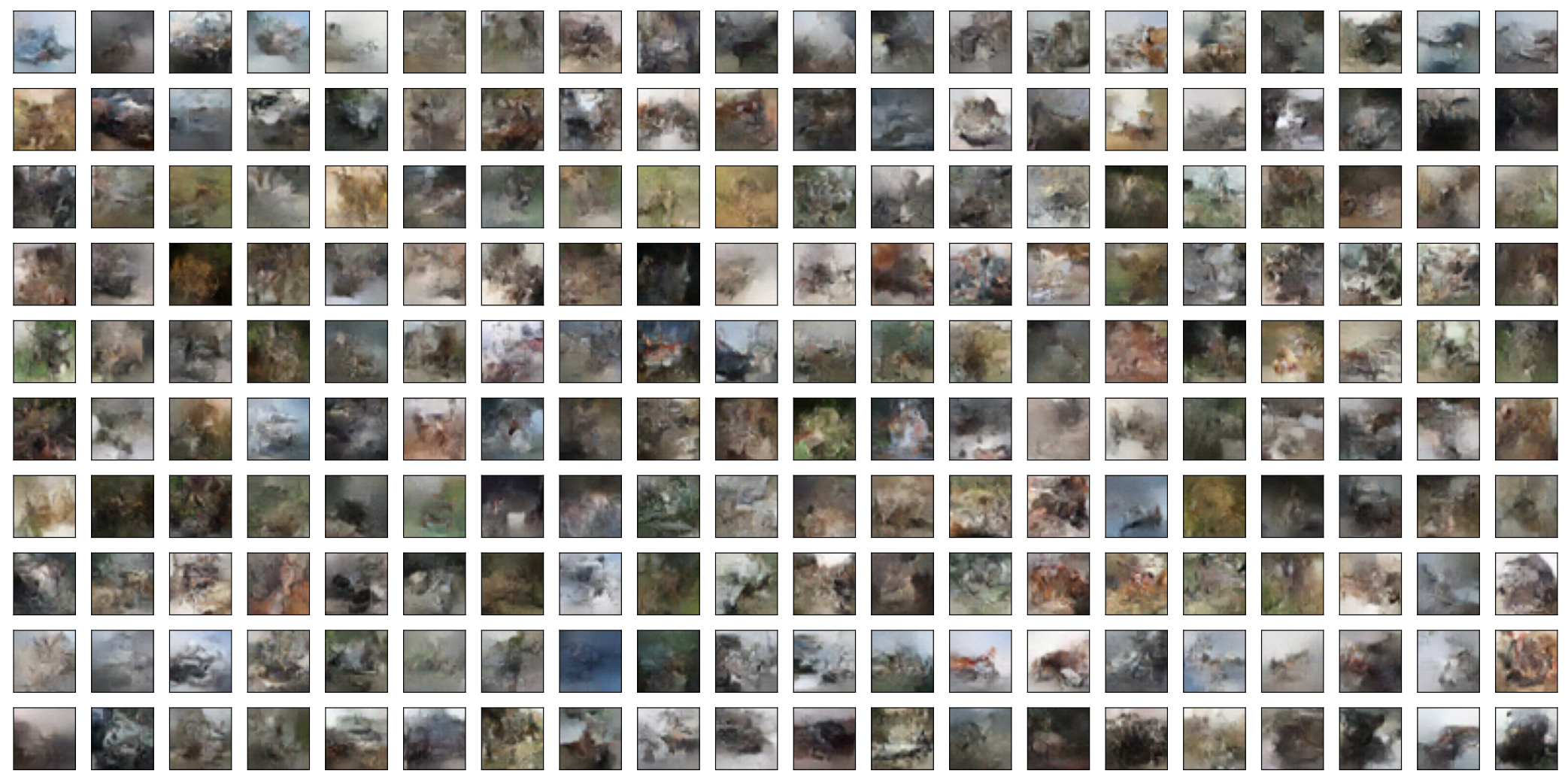} \\[2mm]
\begin{turn}{90} \hspace*{10mm} IB-INN, $\gamma=0.02$ \end{turn}
\includegraphics[width=0.68\linewidth]{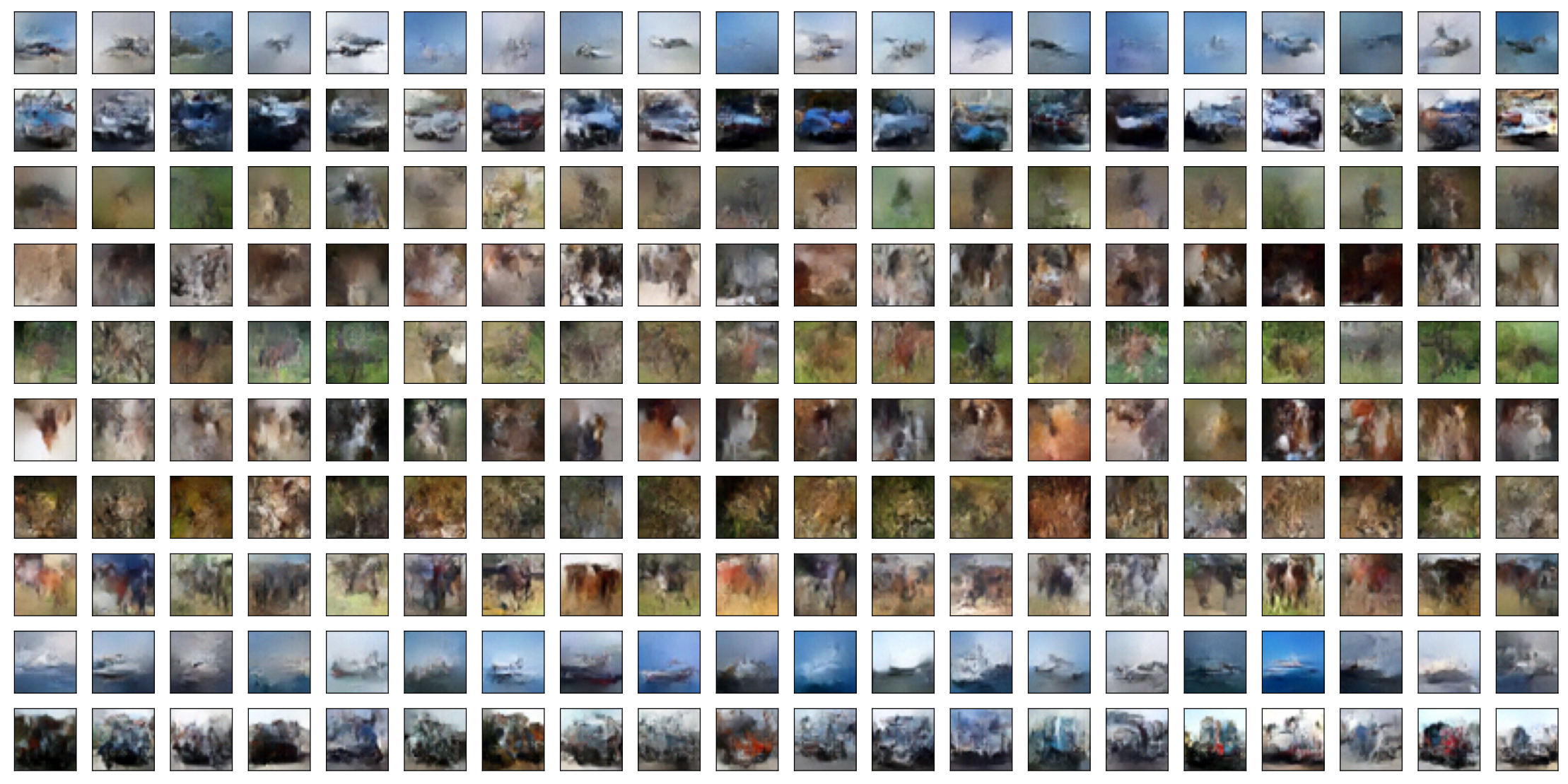} \\[2mm]
\begin{turn}{90} \hspace*{12mm} IB-INN, $\gamma=0.2$ \end{turn}
\includegraphics[width=0.68\linewidth]{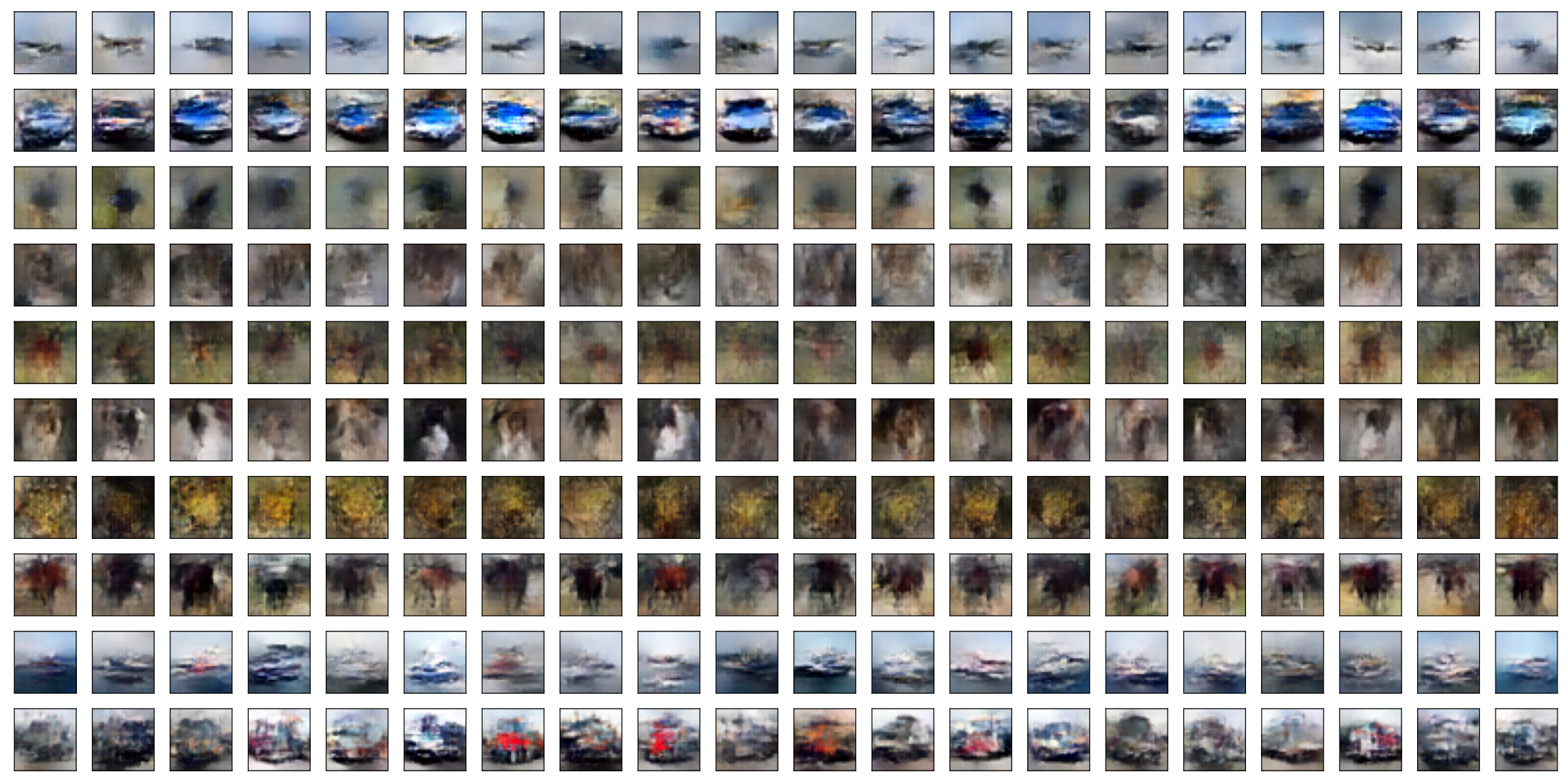} \\[2mm]
\begin{turn}{90} \hspace*{12mm} IB-INN, $\gamma=1.0$ \end{turn}
\includegraphics[width=0.68\linewidth]{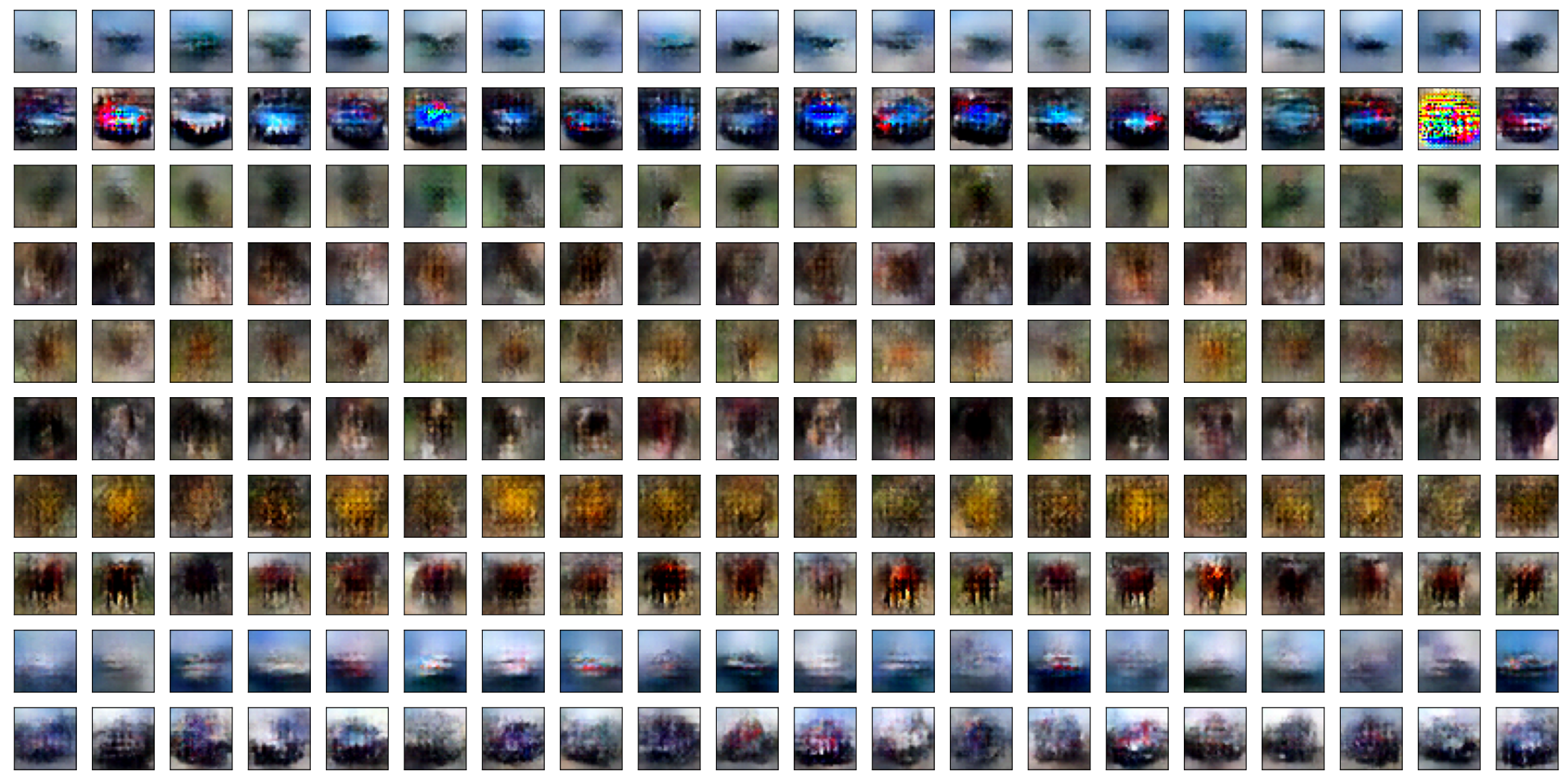} \\[2mm]

\captionof{figure}{Samples from four different models, each using the same architecture but a different loss.
From top to bottom:
class-NLL, $\gamma=0.02$, $\gamma=0.2$, $\gamma=1$.
In each subfigure, each row corresponds to one class.
The classes from top to bottom are: plane, car, bird, cat, deer, dog, frog, horse, boat, truck.}
\label{fig:append_extra_samples}
\end{figure*}

\end{document}